\documentclass{article}

\PassOptionsToPackage{numbers, compress}{natbib}
\usepackage[preprint]{neurips_2026}

\usepackage[utf8]{inputenc} 
\usepackage[T1]{fontenc}    
\usepackage{hyperref}       
\usepackage{url}            
\usepackage{booktabs}       
\usepackage{amsfonts}       
\usepackage{nicefrac}       
\usepackage{microtype}      
\usepackage{xcolor}         
\usepackage{siunitx}
\usepackage{tabularx}
\usepackage{threeparttable}

\hypersetup{
    colorlinks=true,    
    linkcolor=blue,     
    citecolor=blue,     
    filecolor=blue,     
    urlcolor=blue,      
    pdfpagemode=FullScreen,
}

\newif\ifpaper
\newif\ifappendix

\papertrue
\appendixtrue

\title{Uncertainty Quantification for Flow-Based Vision-Language-Action Models}

\usepackage{amsmath}
\usepackage{amssymb}
\usepackage{mathtools}
\usepackage{amsthm}
\usepackage{bm}
\usepackage{algorithm}
\usepackage{algpseudocodex}
\usepackage{multirow}
\usepackage{enumitem}
\usepackage{wrapfig}
\usepackage{minitoc}
\usepackage{xspace}

\usepackage[capitalize,noabbrev]{cleveref}

\theoremstyle{plain}
\newtheorem{theorem}{Theorem}[section]

\newtheorem{lemma}[theorem]{Lemma}

\theoremstyle{definition}

\newtheorem{example}{Example}
\theoremstyle{remark}
\newtheorem{remark}[theorem]{Remark}

\usepackage[textsize=tiny]{todonotes}

\newcommand{\condon}{\,|\,}

\newcommand{\bx}{\bm{x}}
\newcommand{\by}{\bm{y}}
\newcommand{\bu}{\bm{u}}
\newcommand{\bv}{\bm{v}}
\newcommand{\ba}{\bm{a}}
\newcommand{\bs}{\bm{s}}
\newcommand{\bA}{\bm{A}}
\newcommand{\bo}{\bm{o}}
\newcommand{\der}{\mathrm{d}}
\newcommand{\R}{\mathbb{R}}
\newcommand{\E}{\mathbb{E}}
\newcommand{\D}{\mathcal{D}}

\newcommand{\bth}{\bm{\theta}}
\newcommand{\bph}{\bm{\phi}}
\newcommand{\kl}{D_{\text{KL}}}
\newcommand{\gaussian}{\mathcal{N}(\bm{0},\bm{I})}

\newcommand{\method}{SAVE\xspace}

\begin{document}

\doparttoc 
\faketableofcontents 

\author{%
  \begin{minipage}[t]{0.28\textwidth}
    \centering\normalfont
    {\bfseries Ralf R\"omer} \\
    TU Munich
  \end{minipage}%
  \And
  \begin{minipage}[t]{0.28\textwidth}
    \centering\normalfont
    {\bfseries Maximilian Seeliger} \\
    ETH Zurich
  \end{minipage}%
  \And
  \begin{minipage}[t]{0.28\textwidth}
    \centering\normalfont
    {\bfseries Saida Liu} \\
    TU of Munich
  \end{minipage}%
  \AND
  \begin{minipage}[t]{0.28\textwidth}
    \centering\normalfont
    {\bfseries Ben Sturgis} \\
    TU Munich
  \end{minipage}%
  \And
  \begin{minipage}[t]{0.28\textwidth}
    \centering\normalfont
    {\bfseries Marco Bagatella} \\
    ETH Zurich \\
    MPI IS Tübingen
  \end{minipage}%
  \And
  \begin{minipage}[t]{0.28\textwidth}
    \centering\normalfont
    {\bfseries Daniel Marta} \\
    ETH Zurich
  \end{minipage}%
  \AND
  \begin{minipage}[t]{0.28\textwidth}
    \centering\normalfont
    {\bfseries Andreas Krause} \\
    ETH Zurich
  \end{minipage}%
  \And
  \begin{minipage}[t]{0.28\textwidth}
    \centering\normalfont
    {\bfseries Angela P. Schoellig} \\
    TU Munich
  \end{minipage}%
}


\maketitle

\ifpaper

\begin{abstract}
Vision-language-action models (VLAs) combine vision-language backbones with expressive generative action heads trained via flow matching on large-scale robotic datasets.
Despite their strong empirical performance in robotic manipulation, VLAs lack mechanisms to quantify confidence in their predictions and to detect when their actions may be unreliable.
This presents a critical limitation for real-world deployment in non-stationary environments, where models inevitably encounter scenarios outside their pretraining distribution and may fail without warning.
To address this, we derive an efficient method for quantifying epistemic uncertainty in flow-matching models by leveraging velocity-field disagreement~(VFD) across a small ensemble.
We successfully use this uncertainty estimate for failure detection during deployment and active fine-tuning of flow-based VLAs. 
To this end, we propose \method, a framework for uncertainty-guided active multitask fine-tuning that reduces the number of costly expert demonstrations required to adapt VLAs to new tasks. 
Through extensive experiments on the LIBERO benchmark, we demonstrate that VFD yields better-calibrated uncertainty estimates predictive of downstream performance, that VFD achieves strong performance in detecting failures, and that uncertainty-guided data acquisition with \method requires at least~\qty{22}{\percent} fewer samples than baselines. 
In summary, our work shows that quantifying epistemic uncertainty in flow-based VLAs improves both failure awareness and adaptation.
Project website: \href{https://tum-lsy.github.io/uq_vla/}{tum-lsy.github.io/uq\_vla/}.
\end{abstract}


\section{Introduction}
The premise of vision-language-action models (VLAs)~\cite{pmlr-v229-zitkovich23a, kim24openvla, black2024pi0} is to bring the large-scale pre-training paradigm that has been successful in deep learning in recent years~\citep{devlin2019bert,dosovitskiy2020image, awais2025foundation} into the physical world.
Driven by the increasing availability of large-scale robotics datasets~\cite{khazatsky2024droid, o2024open}, modern VLAs have demonstrated impressive performance in multitask learning, exhibiting remarkable robustness and emergent zero-shot capabilities~\cite{physical2025pi0_5}, particularly for tasks that require a nuanced understanding of images and language.
Architecturally, the state of the art has mostly converged on a powerful recipe~\citep{black2024pi0,shukor2025smolva, reuss2025flower, physical2025pi0_5}: a large pre-trained vision-language backbone for semantic understanding, followed by an action expert trained via flow matching~\cite{lipman2023flow, esser2024scaling} to produce smooth, continuous robot actions. Flow-based VLAs provide stable training, relatively fast inference, and the expressivity needed to model the complex, highly multimodal distributions of real-world physical demonstrations.

VLAs are pre-trained on large-scale static datasets that cover a broad distribution of objects and tasks~\cite{khazatsky2024droid, o2024open, bu2025agibot}. 
However, deploying these models in the real world inevitably exposes them to non-stationarity. Users repurpose robots for novel tasks, object appearances may change, and environments evolve.
In such scenarios, pre-trained VLAs may perform well on some tasks but fail completely on others, without explicit knowledge of which tasks are fully understood from pre-training and which require additional data. 
The inability of these models to communicate \textit{what they do not know} prevents robust self-improvement and timely failure detection.
VLAs confidently execute erratic actions in out-of-distribution scenarios~\cite{agia2024unpacking, gu2025safe, romer2025failure} rather than abstaining or asking for help when uncertain~\cite{ren2023robots}, raising significant safety concerns~\cite{li2026vision, gu2025safe, romer2026demonstrations}.
Crucially, robustly adapting VLAs to new domains currently requires collecting large numbers of human expert demonstrations~\cite{bjorck2025gr00t}, which is prohibitively expensive. \looseness -1


To address these problems, our work focuses on uncertainty quantification for flow-based VLAs. Leveraging the generative architecture of these models, we propose an efficient method for estimating epistemic uncertainty by measuring divergence between ensembled velocity fields.
Building on this estimator, we propose
\underline{s}ample-efficient \underline{a}ctive fine-tuning via \underline{v}elocity-field \underline{e}pistemic uncertainty~(\method) for VLAs.
In summary, our main contributions are: \looseness -1
\begin{itemize}[topsep=2pt, itemsep=2pt, partopsep=0pt, parsep=0pt]
    \item We introduce a mathematically grounded epistemic uncertainty estimator for flow-matching models based on velocity field disagreement~(VFD).
    \item We propose \method, a framework for uncertainty-guided active fine-tuning of flow-based VLAs that uses VFD to prioritize tasks and initial states for expert demonstration collection.
    \item We empirically evaluate the effectiveness of our methods, demonstrating that VFD predicts task performance and detects deployment failures more reliably than baselines, and that \method reduces the amount of expert data required for multitask adaptation by \qty{22}{\percent}.
\end{itemize}


\begin{figure}[t!]
    \centering
    \includegraphics[width=\linewidth]{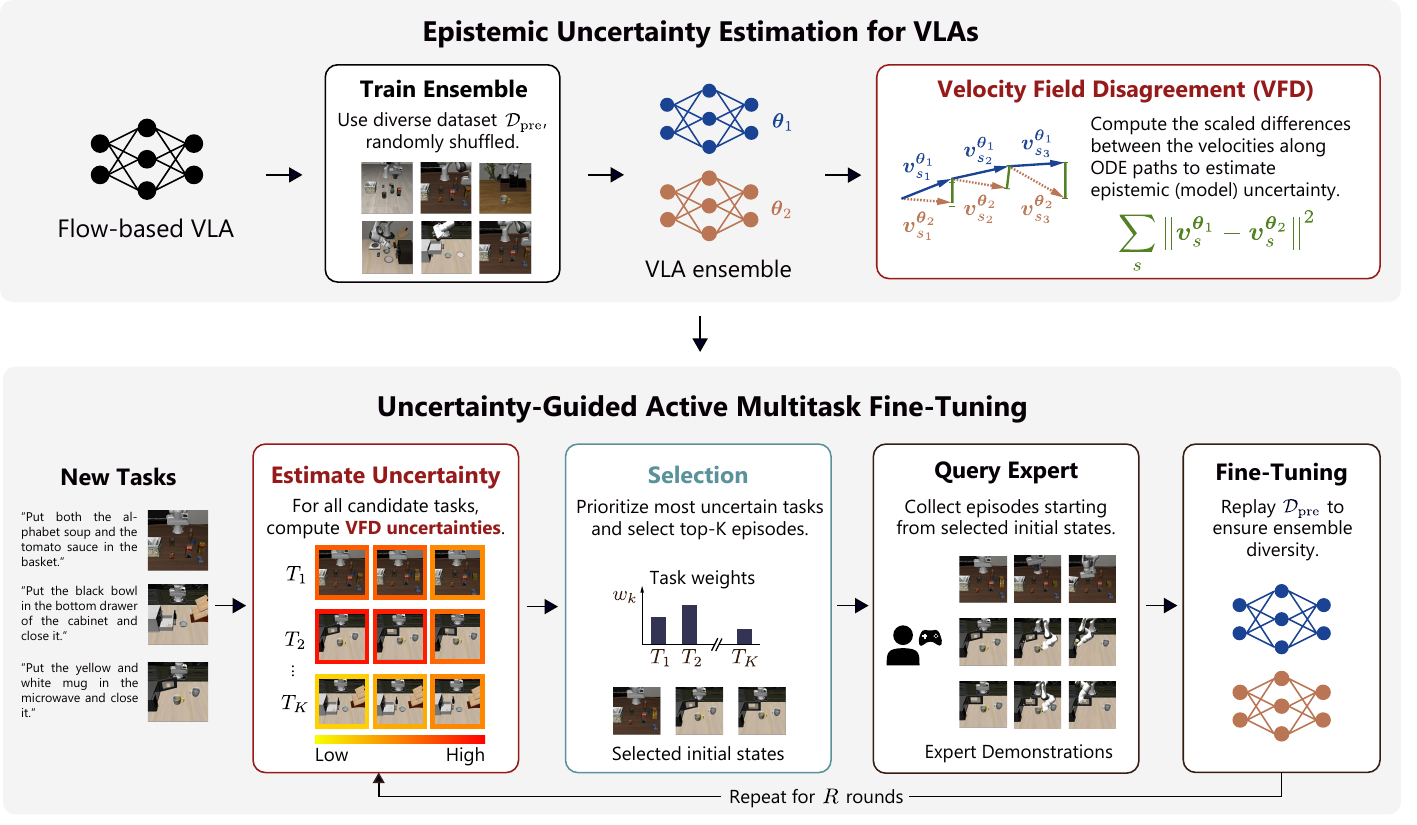}
    \caption{
    \textbf{Top:} VFD quantifies epistemic uncertainty by measuring scaled differences between ensembled velocity fields.
    \textbf{Bottom:} 
    \method prioritizes tasks by their mean VFD uncertainty and, for the most uncertain initial observations within each sampled task, requests an expert demonstration. The models are then fine-tuned using new and replay data, yielding data-efficient multitask adaptation.
    }
    \label{fig:overview}
\end{figure}



\vspace{-1mm}
\section{Related Work}
\vspace{-1mm}
\label{sec:related}

\textbf{Vision-Language-Action Models.}
VLAs have emerged as a prevalent approach for learning manipulation policies, by fine-tuning pre-trained vision-language backbones on robotic data~\citep{kim24openvla, pmlr-v229-zitkovich23a, black2024pi0, shukor2025smolva}. Earlier VLAs map vision-language observations to discrete or continuous action tokens via regression or classification heads~\citep{kim24openvla, pmlr-v229-zitkovich23a}. In this work, we focus on recent flow matching-based VLAs~\citep{black2024pi0, shukor2025smolva}, which provide an expressive framework for modeling the highly multimodal action distributions in expert demonstrations by treating behavior cloning as a conditional generation problem~\citep{black2024pi0, shukor2025smolva}.
However, flow-based VLAs do not natively express uncertainty. Therefore, they cannot inherently identify situations in which they lack knowledge of which action to perform~\citep{agia2024unpacking, lee2024diff}, preventing safe, autonomous real-world deployment.
Fine-tuning a VLA requires collecting many expensive expert demonstrations~\citep{bjorck2025gr00t}. This motivates estimating \textit{where} the model is most uncertain: before deployment, to prioritize situations for querying an expert, and during deployment, to detect failures.

\textbf{Uncertainty Quantification for Generative Models.} \label{sec:uq_generative}
Uncertainty quantification in deep learning aims to estimate the trustworthiness of model outputs~\citep{abdar2021review} and typically decomposes uncertainty into epistemic (model) and aleatoric (data) uncertainty~\cite{hullermeier2021aleatoric}.
While the latter is an irreducible property of the data generation process, epistemic uncertainty can be reduced by collecting new and, importantly, the right data.
Depending on the model class, uncertainty quantification has been instantiated in many different ways. Test-time dropout~\cite{loquercio2020general} and inference on augmented inputs~\cite{ayhan2018test} may, for instance, be applied directly at test time and are mostly architecture-agnostic. More powerful approaches, such as deep ensembles~\cite{lakshminarayanan2017simple} and Laplace approximations~\cite{daxberger2021laplace}, require training multiple models or modifying the training procedure, in exchange for more accurate uncertainty estimates.
With the rise of modern generative models~\cite{ho2020denoising, lipman2023flow}, which, despite their expressivity, often remain overconfident~\cite{nalisnick2018deep, karczewski2025diffusion}, several recent works have investigated uncertainty quantification for this model class.
Jazbec et al.~\cite{jazbec2025generative} apply a last-layer Laplace approximation to diffusion models and measure variability in a semantic representation space using a Gaussian approximation, which may, however, be unsuitable for multi-modal distributions.
DECU~\cite{berry2024shedding} trains an ensemble of latent diffusion models and performs pairwise distance estimation using the denoising means. 
Other recent works rely on similarity metrics in language space~\cite{franchi2025towards}, leverage hypernetworks~\cite{chan2024estimating}, or train explicit confidence predictors~\cite{mei2025world}. 
Closer to our setting, \citet{ju2026epistemic} estimate epistemic uncertainty for pre-trained VLMs through a Riemannian formulation of flow matching, but do not target action generation or active fine-tuning of VLAs.
Uncertainty quantification is also an active area of research in autoregressive models~\cite{shorinwa2025survey}, with domain-specific techniques spanning from differential entropy over vocabulary~\cite{ling2024uncertainty}, to perplexity~\cite{ren2023out, fadeeva2024fact}, and Dirichlet evidence~\cite{ma2025estimating}, but these methods primarily measure aleatoric uncertainty, i.e., data ambiguity.
In this work, we focus on estimating epistemic uncertainty in flow-matching models, which have become the state of the art in generative modeling.

\textbf{Active Learning.}\label{sec:al_generative}
A direct application of reliable uncertainty estimates is in active data selection, a well-established research field with deep roots in experimental design~\citep{chaloner1995bayesian} and active learning~\citep{settles2009active}, including Bayesian disagreement-based acquisition for deep models~\citep{gal2017deep, kirsch2019batchbald}.
While a large body of literature has proposed domain-agnostic approaches~\citep{ash2019deep,holzmuller2023framework}, several recent works have focused on active learning in MDPs, particularly for learning policies~\citep{judah2012active, he2025uncertainty}. Several existing approaches operate offline \citep{hejna2025robot, agia2024unpacking}, and filter a fixed dataset based on mutual information between states and actions or influence detection in a static setting. A long-running line of research \cite{cui2019uncertainty, karli2025ask} operates instead in an online setting, and tries to infer \textit{when} a human labeling effort is required. Closer to our approach, \citet{bagatella2025active} also focus on an iterative setting but propose an algorithm that queries \textit{entire} expert demonstrations to maximize information gain over multi-task expert trajectories. While this approach has strong regret guarantees, it does not scale to modern VLAs. 
In contrast, we directly leverage the generative formulation of state-of-the-art VLAs for uncertainty-guided active fine-tuning. \looseness -1

\section{Preliminaries}
\label{sec:preliminaries}

\textbf{Bayesian Uncertainty Quantification.}
We consider a dataset~${\D=\{(\bx^{(n)}, \by^{(n)})\}_{n=1}^N}$ and a neural network with parameters~$\bth \in \Theta$ trained to learn the relationship between~$\by \in \mathcal{Y}$ and~$\bx \in \mathcal{X}$. 
Bayesian modeling considers the posterior predictive distribution
\begin{align}
    \label{eq:pre_bayes_predictive_dist}
    p(\bx \condon \by, \D) = \int p(\bx \condon \by, \bth) p(\bth \condon \D)\der \bth,
\end{align}
which involves the posterior~$p(\bth \condon \D)$ that is generally intractable for deep neural networks~\cite{hullermeier2021aleatoric}. For this reason, the integral~\eqref{eq:pre_bayes_predictive_dist} has to be approximated in practice by an MC estimate~${p(\bx \condon \by, \D) \approx \frac{1}{M} \sum_{i=1}^M p(\bx \condon \by, \bth_i)}$, ${\bth_i \sim p(\bth \condon \D)}$,
%
%
where approaches such as ensembling~\cite{lakshminarayanan2017simple} or dropout~\cite{gal2016dropout} 
can be used to obtain~$\bth_1,\dots,\bth_M \sim p(\bth \condon \D)$.
The total uncertainty in the generation~$\bx$ for an input~$\by$ is given by the entropy of the predictive distribution~\eqref{eq:pre_bayes_predictive_dist}, ${H\big[p(\bx \condon \by, \D)\big] = -\int p(\bx \condon \by, \D) \log{p(\bx \condon \by, \D)}\der \bx}$,
%
%
which includes both irreducible aleatoric uncertainty and epistemic uncertainty, which can be reduced by collecting additional data.
Subtracting the aleatoric component, given by the expected entropy, yields the epistemic uncertainty~\cite{hullermeier2021aleatoric}
\begin{align}
    I(\bx, \bth \condon \by, \D) &= 
    H\big[p(\bx \condon \by, \D)\big] - \E_{\bth \sim p(\cdot \condon \mathcal{D})}\left[H\big[p(\bx \condon \by, \bth)\big]\right] \nonumber \\
    \label{eq:pre_bayes_mutual_information}
    &= \E_{\bth \sim p(\cdot \condon \D)}\left[\kl(p(\bx \condon \by, \bth)\,||\,p(\bx \condon \by, \D))\right],
\end{align}
which is equivalent to the mutual information between~$\bx$ and~$\bth$ and can be viewed as the expected disagreement between 
a model learned from the data and the true posterior~\eqref{eq:pre_bayes_predictive_dist} at a test input~$\by$.

\textbf{Flow Matching.}
The goal of generative modeling is to learn a probability distribution~$q$ from a set of samples~$\mathcal{D}$.
We first consider unconditional generation, where~${\D = \{\bx^{(n)}\}_{n=1}^N\sim q}$, ${\bx^{(n)} \in \R^d}$.
Flow matching models a time-dependent velocity field~${\bu_s: [0,1] \times \R^d \rightarrow \R^d}$ that generates a probability path~$(p_s)_{0\leq s \leq 1}$ transforming a source distribution~${p_0}$ (e.g., a Gaussian) to the data distribution~${p_1 = q}$.
This means the flow~${\bph_s:[0,1] \times \R^d \rightarrow \R^d}$ solving the ODE~${\frac{\der}{\der s}\bph_s(\bx) = \bu_s(\bph_s(\bx))}$, ${\bph_0(\bx) = \bx}$ satisfies~${\bx_s = \bph_s(\bx_0) \sim p_s}$ for~$\bx_0 \sim p_0$. As~$\bu_s(\bx)$ is unknown, conditional flow matching~\cite{lipman2023flow} constructs~$p_s$ as the marginal of simpler conditional paths~${p_s(\bx \condon \bx_1)}$, where~${\bx_1 \sim q}$.
A common choice are optimal transport~(OT) Gaussian conditional paths~${p_s(\bx \condon \bx_1) = \mathcal{N}(\bx\condon s \bx_1, (1-s)^2 \bm{I})}$, for which the conditional velocity field~$\bu_s(\bx \condon \bx_1)$ has an analytical expression.
A neural network~${\bv_s^{\bth}}$ is then trained to regress onto this target vector. 
Given a dataset~${\D=\{(\bx^{(n)}, \by^{(n)})\}_{n=1}^N} \sim q$, a common training objective for learning~$q(\bx \condon \by)$ is
\begin{align}
    \label{eq:pre_fm_loss_conditional}
    \mathcal{L}(\bth) = \mathbb{E}_{s \sim \mathrm{Unif}([0,1]),(\bx_1, \by) \sim \mathrm{Unif}(\D), \bx_s \sim p_s(\cdot \condon \bx_1)}\left[\left\|\bv_s^{\bth}(\bx_s, \by) - \bu_s(\bx_s \condon \bx_1)\right\|_2^2\right].
\end{align}
New samples~$\bx \sim p_1(\cdot \condon \by)$ from the learned distribution can be generated, for example, through Euler integration starting from~${\bx_0 \sim p_0}$ with step size~$\delta s = 1/N_\text{s}$ and
$\bx_{s+\delta s} = \bx_s + \bv_s^{\bth}(\bx_s,\by)\delta s$.

\textbf{Vision-Language-Action Models.}
A VLA is a robot policy~$\pi_{\bth}:\mathcal{O}\rightarrow \Delta(\mathcal{A})$ that maps multimodal observations~$\bo_t \in \mathcal{O}$ to distributions over actions~$\bm{a}_t \in \mathcal{A}$.
The observations typically contain the robot's proprioceptive state~$\bs_t$, camera images~$\bm{I}_t$ and language instructions~$\bm{l}$, i.e., $\bo_t=(\bs_t,\bm{I}_t,\bm{l})$.
Flow-based VLAs~\cite{black2024pi0, reuss2025flower, shukor2025smolva, physical2025pi0_5} are typically trained through behavior cloning from large-scale data~${\D=\{(\bo^{(n)},\ba^{(n)})\}_{n=1}^N\sim q}$ using the flow-matching loss~\eqref{eq:pre_fm_loss_conditional} with~${\bx=\bA_t=(\ba_{t},\ba_{t+1},\dots,\ba_{t+H})}$ and~$\by=\bo_t$, generating entire action chunks to increase temporal consistency and 
robustness against non-Markovian expert behavior~\cite{chi2023diffusion, physical2025pi0_5}.

\textbf{Problem Statement.} 
\label{sec:pre_problem}
At its core, our work focuses on uncertainty estimation to detect failures and guide data acquisition for VLAs.
More formally, we consider a set of $K$ tasks~${\mathcal{T}=\{T_k\}_{k=1}^K}$, each of which can be described by an MDP~${\mathcal{M}_k=(\mathcal{O}, \mathcal{A}, P_k, R_k, \gamma)}$, where $\mathcal{O}$ and $\mathcal{A}$ are observation and action spaces, ${P_k: \mathcal{O} \times \mathcal{A} \to \Delta(\mathcal{O})}$, ${R_k: \mathcal{O} \to \mathbb{R}}$ and ${\rho_k \in \Delta(\mathcal{O})}$ are task-specific dynamics, rewards and initial observation distributions, and $\gamma$ is a discount factor. Given a policy~${\pi: \mathcal{O} \to \Delta(\mathcal{A})}$, its single-task performance is simply~${J_k(\pi) = \mathbb{E}_{\pi, P_k, \rho_k} \sum_{t=0}^{\infty} \gamma^t R_k(\bo_t)}$, and its multitask performance is~${J(\pi) = \frac{1}{K} \sum_{k=1}^{K} J_k(\pi)}$. Our goal is to fine-tune a flow-based VLA~$\pi_{\bth}$ (pre-trained on large-scale, diverse data) to maximize its average multitask performance.
For each task~$T_k \in \mathcal{T}$, we assume the availability of a pool of~$L$ candidate initial observations~$\mathcal{O}_k = \{\bo_{kl}\}_{l=1}^{L} \sim \rho_k$, where each~$\bo_{kl}$ in practice contains an initial robot state, camera observations and language instructions. Without any environment interaction, a subset of observations~${\hat{\mathcal{O}} \subset \cup_{k=1}^{K} \mathcal{O}_k}$ needs to be selected, and an expert demonstrator~${\pi^* \approx \mathrm{argmax}_\pi J(\pi)}$ will return a demonstration starting from each of the selected observations: ${\tau_\mathrm{e}=(\bo_0, \ba_0, \dots)}$ with $\ba_t \sim \pi^*(\bo_t)$ and ${\bo_{t+1} \sim P_k(\bo_t, \ba_t)}$ for each~${\bo_0 \in \mathcal{\hat O}}$.

\section{Epistemic Uncertainty Estimation in Flow-Matching Models} \label{sec:epistemic_uncertainty_estimation}


Training a flow-matching model via~\eqref{eq:pre_fm_loss_conditional} yields an estimate~$\bth$ of the model parameters given the data~$\D$.
We aim to quantify whether the model, for a given conditioning~$\by$, exhibits high epistemic uncertainty that could be reduced by collecting more data.
A simple estimate of total predictive uncertainty is the conditional entropy $H[p(\bx \mid \by, \bth)]$~\cite{romer2025failure}. 
As this quantity conflates epistemic and aleatoric uncertainty, we need to isolate the epistemic component~\eqref{eq:pre_bayes_mutual_information}.
We denote~$p^{\bth_i}(\bx \condon \by)=p(\bx \condon \by, \bth_i)$ and consider the MC approximation of~\eqref{eq:pre_bayes_mutual_information} from a set of model parameters~$\bth_1,\dots,\bth_M\sim p(\bth \condon \D)$, i.e.,
\begin{subequations}
\begin{align}
    \label{eq:meth_mutual_information_approx}
    I(\bx, \bth \condon \by, \D) &\approx \frac{1}{M}\sum_{i=1}^M \kl\left(p^{\bth_i}(\bx \condon \by) \,||\, p(\bx \condon \by, \D)\right) \\
    &\approx \frac{1}{M} \sum_{i=1}^M \kl\left(p^{\bth_i}(\bx \condon \by) \,||\, \frac{1}{M} \sum_{j=1}^M p^{\bth_j}(\bx \condon \by) \right) \\
    \label{eq:meth_epistemic_unc_inequality}
    &\leq \frac{1}{M^2} \sum_{i,j=1}^M \kl\left(p^{\bth_i}(\bx \condon \by) \,||\, p^{\bth_j}(\bx \condon \by) \right),
\end{align}
\end{subequations}
where~\eqref{eq:meth_epistemic_unc_inequality} follows by applying Jensen's inequality to each
term in the outer sum, using the convexity of the KL divergence in its second argument.
Hence, the average pairwise KL divergence between flow-matching models trained on the same data~$\mathcal{D}$, evaluated for a conditioning input~$\by$, represents an approximate upper bound of the epistemic uncertainty.

In principle, the KL divergence between two flow-matching models can be estimated by computing the likelihood of individual samples by solving an augmented ODE that involves the divergence of the learned velocity field. 
However, computing the divergence is prohibitively expensive in high dimensions.
Therefore, we derive a direct relationship between the pairwise KL divergence and the learned velocity fields. For the proof, we refer to Appendix~\ref{proof:velocity_difference}.


\begin{theorem} \label{the:kl_velocity_difference}
Consider two distributions~$p^{\bth_1}(\bx \condon \by)$, $p^{\bth_2}(\bx \condon \by)$ induced by velocity fields~$\bv_s^{\bth_1}(\bx,\by)$, $\bv_s^{\bth_2}(\bx,\by)$ of flow-matching models with OT Gaussian conditional probability paths.
Define~${\kappa_s=\frac{s}{1-s}}$, and assume that, for all~${s\in [0,1]}$, the marginal probability densities and velocity fields decay sufficiently fast at infinity such that
$p^{\bth_1}_s(\bx\condon\by)\,\bv_s^{\bth_1}(\bx,\by) \to 0$, $p_s^{\bth_2}(\bx\condon\by)\,\bv_s^{\bth_2}(\bx,\by) \to 0$ as~${\|\bx\| \to \infty}$.
Then, the KL divergence between the two distributions is given by
\begin{align}
\label{eq:kl_velocity_difference}
D_{\text{\normalfont KL}}\big(p^{\bth_1}(\bx &\condon \by)\,||\,p^{\bth_2}(\bx \condon \by))\big) = \int_{0}^1  \kappa_s \mathbb{E}_{\bx_s \sim p_s^{\bth_1}(\bx \condon \by)}\left[\left\|\bv_s^{\bth_1}(\bx_s, \by) - \bm{v}_s^{\bth_2}(\bx_s, \by)\right\|_2^2\right]\mathrm{d}s.
\end{align}
\end{theorem}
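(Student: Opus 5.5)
The plan is to track the KL divergence along the two probability paths and integrate its time derivative. Write $p^i_s = p^{\bth_i}_s(\cdot\condon\by)$ and $\bv^i_s = \bv_s^{\bth_i}(\cdot,\by)$ for $i=1,2$, with $\by$ fixed throughout. Both paths start from the same source, $p^1_0 = p^2_0 = \gaussian$, so $\kl(p^1_0\,||\,p^2_0)=0$. Hence
\begin{align*}
\kl(p^1_1\,||\,p^2_1) = \int_0^1 \frac{\der}{\der s}\kl(p^1_s\,||\,p^2_s)\,\der s .
\end{align*}
It therefore suffices to show that the integrand equals $\kappa_s\,\E_{p^1_s}\|\bv^1_s-\bv^2_s\|_2^2$.

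\textbf{Step 1 (time derivative of the KL).} Each $p^i_s$ satisfies the continuity equation $\partial_s p^i_s = -\nabla\cdot(p^i_s \bv^i_s)$. Differentiate $\int p^1_s \log(p^1_s/p^2_s)\,\der\bx$ under the integral and substitute both continuity equations. The term $\int \partial_s p^1_s\,\der\bx$ vanishes by mass conservation. Integrating by parts, with boundary terms killed by the decay assumption $p^i_s\bv^i_s\to 0$, gives the standard identity
\begin{align*}
\frac{\der}{\der s}\kl(p^1_s\,||\,p^2_s) = \E_{\bx\sim p^1_s}\left[(\bv^1_s-\bv^2_s)^\top\big(\nabla\log p^1_s-\nabla\log p^2_s\big)\right].
\end{align*}

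\textbf{Step 2 (score–velocity identity for OT Gaussian paths).} Here I use that each model's velocity field is the marginal field of the OT Gaussian path ending at its own terminal distribution. That path is $\bx_s = s\bx_1+(1-s)\bx_0$ with $\bx_0\sim\gaussian$, so $\bv_s(\bx)=\E[\bx_1-\bx_0\condon\bx_s=\bx]$.

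Gaussian conditioning (Tweedie) gives $\nabla\log p_s(\bx) = -\E[\bx_0\condon\bx_s=\bx]/(1-s)$. Combining this with $\bx = s\E[\bx_1\condon\bx]+(1-s)\E[\bx_0\condon\bx]$ yields $\E[\bx_0\condon\bx]=\bx-s\bv_s(\bx)$. Therefore
\begin{align*}
\nabla\log p_s(\bx) = \frac{s\,\bv_s(\bx)-\bx}{1-s}.
\end{align*}
Subtracting this identity for the two models, the $\bx$ terms cancel:
\begin{align*}
\nabla\log p^1_s-\nabla\log p^2_s=\kappa_s(\bv^1_s-\bv^2_s).
\end{align*}
Plugging this into Step 1 gives exactly $\kappa_s\,\E_{p^1_s}\|\bv^1_s-\bv^2_s\|^2$, and integrating over $s\in[0,1]$ yields \eqref{eq:kl_velocity_difference}.

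\textbf{Main obstacle.} The delicate point is Step 2. The identity requires that the learned field $\bv^{\bth_i}_s$ is exactly the marginal OT velocity of its own induced path, i.e.\ that $p^i_s$ is the law of $s\bx_1+(1-s)\bx_0$ with $\bx_1\sim p^i_1$. I would state this as the interpretation of ``induced by flow-matching models with OT Gaussian conditional paths'' in the theorem.

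A secondary technical issue is the endpoint $s=1$, where $\kappa_s$ blows up. I would first integrate over $[0,1-\epsilon]$ and then let $\epsilon\to 0$. Convergence follows by monotone convergence, since the integrand is nonnegative, together with lower semicontinuity and continuity of $\kl(p^1_s\,||\,p^2_s)$ as $s\to 1$.
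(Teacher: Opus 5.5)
Your proposal is correct and follows the paper's proof step by step. You write the KL as the integral of its time derivative from the common Gaussian source, use the continuity equation and integration by parts under the decay assumption to reach $\E_{p^1_s}[(\bv^1_s-\bv^2_s)^\top(\nabla\log p^1_s-\nabla\log p^2_s)]$, and then apply the Tweedie-based identity $\nabla\log p_s(\bx)=(s\bv_s(\bx)-\bx)/(1-s)$, which is the paper's Lemma (you derive it through $\E[\bx_0\condon\bx_s]$ instead of $\E[\bx_1\condon\bx_s]$). Your explicit assumption that each learned field is the exact marginal OT velocity of its own induced path, and your $\epsilon\to 0$ argument at $s=1$, only make precise points that the paper's proof uses implicitly.
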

Note that by definition of~$\kappa_s$, velocity differences at higher flow-matching times~$s$, where the samples contain less noise, are more indicative of epistemic uncertainty.
\Cref{the:kl_velocity_difference} allows us to efficiently approximate the KL divergence terms in~\eqref{eq:meth_epistemic_unc_inequality} by sampling~$\bx_0\sim p_0$,
performing Euler integration using the learned velocity field~$\bv_s^{\bth_1}$ and calculating the \textit{velocity field disagreement~(VFD)} at intermediate ODE states~$\bx_0,\bx_{\delta s},\dots$, i.e.,
\begin{align}
    \label{eq:meth_kl_velocity_difference_approximation}
    D_{\text{\normalfont KL}}\big(p^{\bth_1}(\bx &\condon \by)\,||\,p^{\bth_2}(\bx \condon \by))\big) \approx 
    \frac{1}{N_s}\mathbb{E}_{\bx_0\sim p_0}\Bigg[
    \sum_{\ell=0}^{N_s-1}
    \kappa_{s_\ell} \left\|\bv_{s_\ell}^{\bth_1}(\bx_{s_\ell}, \by) - \bv_{s_\ell}^{\bth_2}(\bx_{s_\ell}, \by)\right\|_2^2\Bigg],
\end{align}
where~$s_\ell = \ell\delta s$.
Finally, denoting~$\mathcal{V}=(\bv_s^{\bth_1},\dots,\bv_s^{\bth_M})$,  plugging~\eqref{eq:meth_kl_velocity_difference_approximation} into~\eqref{eq:meth_epistemic_unc_inequality} and averaging over non-identical pairs, we define the VFD score for tractable epistemic uncertainty estimation as
\begin{align}
    \label{eq:meth_vfd_score}
    u_\mathrm{e}(\by; \mathcal{V}) = \frac{1}{M(M-1) N_s} \mathbb{E}_{\bx_0\sim p_0} \Bigg[ \sum_{\substack{i,j=1 \\ j\neq i}}^M \sum_{\ell=0}^{N_s-1} \kappa_{s_\ell} \left\|\bv_{s_\ell}^{\bth_i}(\bx_{s_\ell}^{(i)}, \by) - \bv_{s_\ell}^{\bth_j}(\bx_{s_\ell}^{(i)}, \by)\right\|_2^2\Bigg],
\end{align}
where~$\bx_{s_\ell + \delta s}^{(i)} = \bx_{s_\ell}^{(i)} + \bv_{s_\ell}^{\bth_i}\big(\bx_{s_\ell}^{(i)}, \by\big) \delta s$, $\bx_0^{(i)} = \bx_0$.
In practice, we approximate the expectation in~\eqref{eq:meth_vfd_score} by forward integrating a small batch of~$B$ actions, leveraging GPU parallelization. 


\section{Uncertainty-Guided Active Multitask Fine-Tuning} \label{sec:method}

\begin{algorithm}[t!]
    \caption{Uncertainty-Based Active Fine-Tuning of Flow-Based VLAs with \method.} 
    \label{alg:active_learning}
\begin{algorithmic}[1]
    \State {\bfseries Input:} Base VLA~$\pi_\mathrm{b}$, pool of initial configurations~$\{\mathcal{O}_k\}_{k=1}^K$, diverse dataset~$\mathcal{D}_{\mathrm{pre}}$.
    \State {\bfseries Output:} Final policy~$\pi_R$.
    \State {\bfseries Hyperparameters:} Rounds~$R$, queries per round~$n_\mathrm{e}$, temperature $\tau$, replay ratio $\lambda$.
    \State From~$\pi_\mathrm{b}$, train an ensemble~$\Pi_0$ on~$\mathcal{D}_{\mathrm{pre}}$.
    \State Initialize buffer~$\mathcal{D}_\mathrm{new}^{(\leq0)} \leftarrow \emptyset$.
    \For{$r = 0$ {\bfseries to} $R-1$}
    \Comment{Active learning loop.}
        \For{$k=1$ {\bfseries to} $K$} 
            \For{{\bfseries each} $\bo_{kl}\in\mathcal{O}_k$}
                \State Compute VFD uncertainty~$u_{kl}^{(r)} = u_\mathrm{e}(\bo_{kl};\Pi_r)$ via~\eqref{eq:meth_vfd_score}.
            \EndFor
            \State Compute task uncertainty $U_k^{(r)}$ via~\eqref{eq:aggregate_uncertainties}.
        \EndFor        
        \State Compute task weights~$W^{(r)}$ via~\eqref{eq:normalize_uncertainties}.
        \For{$\xi=1$ {\bfseries to} $n_{\text{e}}$} \Comment{Data acquisition.}
        \State Sample $k \sim \mathrm{Cat}(W^{(r)})$.
        \State Select the most uncertain initial observation~$\bo^*$ from task~$T_k$ via~\eqref{eq:meth_most_uncertain_initial_state}.
        \State Query the expert to collect a demonstration~$\tau_\mathrm{e}$ starting from $\bo^*$ and add~$\tau_\mathrm{e}$ to $\mathcal{D}_{\mathrm{new}}^{(\le r)}$.
        \EndFor
        \State Construct training mixture $\mathcal{D}_{\mathrm{train}}^{(r)}$ via~\eqref{eq:mixture_ratio}.
        
        \State $\Pi_{r+1}\leftarrow \mathrm{Fine\text{-}tune}(\Pi_r, \mathcal{D}_{\mathrm{train}}^{(r)})$, 
        $D_\mathrm{new}^{(\leq r+1)} \leftarrow D_\mathrm{new}^{(\leq r)}$.
    \EndFor
\end{algorithmic}
\end{algorithm}

Having proposed VFD for estimating epistemic uncertainty in flow-matching models, we now build on this and present \method for sample-efficient active multitask adaptation of flow-based VLAs. 
As established in~\Cref{sec:pre_problem}, expert rollouts are unavailable for data acquisition.
Instead, we must decide which demonstrations are most valuable to collect based solely on potential initial configurations for the given tasks.
For this purpose, we calculate the VFD uncertainty for candidate initial observations.
Higher values indicate the policy is more uncertain about how to act in a given scene, suggesting limited exposure during pre-training and a potential benefit from additional expert demonstrations.


\textbf{VLA Ensemble.}
The high computational cost of pre-training foundation models makes it prohibitively expensive to train multiple VLAs from scratch to sample from the posterior.
Instead, we take a single pre-trained base VLA~$\pi_\text{b}$ and fine-tune it~$M$ times on randomly shuffled versions of a diverse dataset~$\mathcal{D}_{\text{pre}}$ to obtain a VLA ensemble~${\Pi_0 = (\pi_0^{(1)},\dots, \pi_0^{(M)})}$ for uncertainty estimation.


\textbf{Uncertainty-Guided Data Acquisition.}
We train the VLA ensemble over $R$ rounds of active fine-tuning. At each round $r$, we evaluate the current ensemble $\Pi_r$ on all candidate initial observations and compute, for all tasks~$k \in \{1,\dots,K\}$ and candidate initial observations~$\bo_{kl} \in \mathcal{O}_k$,
$u_{kl}^{(r)} = u_\text{e}(\bo_{kl}; \Pi_r)$,
where the VFD score~$u_\text{e}(\cdot)$ measuring epistemic uncertainty is defined in~\eqref{eq:meth_vfd_score}. 
A simple strategy would be to always select the single most uncertain candidate observation.
However, this approach can overemphasize a small set of outlier scenarios, leading to poor task coverage and detrimental effects on multitask improvement.
Thus, we first aggregate uncertainty at the task level using the mean score
\begin{align}
    \label{eq:aggregate_uncertainties}
    U_k^{(r)} = \frac{1}{L} 
    \sum_{l=1}^L u_{kl}^{(r)}.
\end{align} 

It may then be tempting to directly request a batch of demonstrations from the most uncertain task; however, this is both theoretically flawed (as mutual information would need to be updated as soon as a single new demonstration is collected) and practically suboptimal (as an excessive number of demonstrations could be collected for an easy task).
While posterior uncertainty can be easily computed under strong assumptions about the model class (e.g., analytically, without even observing $\bx$ in the case of Gaussian Processes \citep{bagatella2025active}), its estimation with neural networks remains, in general, intractable.
As collecting demonstrations in batches remains practically necessary for computational reasons, we instead introduce diversity by defining a categorical sampling distribution~$\mathrm{Cat}(W^{(r)})$ with a temperature parameter~$\tau \geq 0$ and weights~$W^{(r)}=(w_1^{(r)},\dots,w_K^{(r)})$ over tasks, where
\begin{align}
    \label{eq:normalize_uncertainties}
    w_k^{(r)} =
    \frac{\left(U_k^{(r)}\right)^\tau}
    {\sum_{k'=1}^{K}\left(U_{k'}^{(r)}\right)^\tau}.
\end{align}
Larger values of $\tau$ concentrate probability mass on the most uncertain tasks, and $\tau=0$ corresponds to uniform sampling.
We perform $n_\mathrm{e}$ expert queries per round and first sample a task index $k \sim \mathrm{Cat}(W^{(r)})$, effectively prioritizing tasks with high estimated uncertainty while retaining exploration over all tasks. We then select the most uncertain initial observation within the sampled task,
\begin{align}
\label{eq:meth_most_uncertain_initial_state}
\bo_k^* = \arg\max_{\bo_{kl}\in\mathcal{O}_k} u_{kl}^{(r)},
\end{align}
and request an expert demonstration starting from $\bo_0=\bo_k^*$, $\tau_\mathrm{e}=(\bo_0,\ba_0,\dots)$. Repeating this procedure $n_\mathrm{e}$ times yields a batch of newly collected demonstrations $\mathcal{D}_{\mathrm{new}}^{(r)}$.

\textbf{Iterative Fine-tuning.}
Typically, fine-tuning only on newly collected demonstrations leads to forgetting previously acquired capabilities~\cite{dohare2024loss, romer2025failure}. To avoid this issue, we fine-tune the VLA ensemble on a mixture of pre-training data and queried data,
\begin{align}
\label{eq:mixture_ratio}
\mathcal{D}_{\mathrm{train}}^{(r)}
\sim
\lambda\; \mathrm{Unif}(\mathcal{D}_{\mathrm{pre}})
+
(1-\lambda)\;\mathrm{Unif}(\mathcal{D}_{\mathrm{new}}^{(\le r)}),
\end{align}
where $\mathcal{D}_{\mathrm{new}}^{(\le r)}$ denotes all demonstrations collected up to round $r$, and $\lambda\in[0,1]$ controls the replay ratio.
By determining which tasks currently need data most and then selecting the highest-uncertainty initial observations \textit{within} those tasks, \method balances exploitation of uncertain scenes with exploration across tasks for sample-efficient multitask adaptation.



\section{Experiments}
\label{sec:experiments}
With our experiments, we primarily aim to answer four research questions: 
\begin{enumerate}[label=\textbf{Q\arabic*:}, topsep=2pt, itemsep=2pt, partopsep=0pt, parsep=0pt]
    \item What is the best strategy for estimating epistemic uncertainty in flow-based VLAs?
    \item How does uncertainty calibration affect the performance of active fine-tuning with \method?
    \item How does uncertainty-guided data acquisition compare to optimizing for diversity alone?
    \item Are uncertainty estimates informative for detecting failures during deployment?
\end{enumerate}


\subsection{Experimental Setup}

\textbf{Environments.}
We conduct our experiments on the LIBERO~\cite{liu2023libero} benchmark, which simulates a Franka manipulator in various household environments.
For active fine-tuning, we consider~$K=10$ long-horizon tasks from the most challenging suite, LIBERO-10.
The initial VLA ensemble~$\Pi_0$ is trained on \num{30} tasks from the LIBERO-Goal, Spatial, and Object suites, and we also include three active-learning tasks in~$\mathcal{D}_\mathrm{pre}$ to instill prior task competence that modern VLAs increasingly possess.

\textbf{Implementation Details.}
We use SmolVLA~\cite{shukor2025smolva} in our experiments, which attaches an action expert trained via flow matching to a pre-trained SmolVLM-2 backbone and uses both cross- and self-attention during action generation.
The policy is conditioned on two camera images (third-person and wrist), the proprioceptive state, and the task language instruction, generates action chunks of length~$H=50$, and replans every \num{25} environment steps, using an ODE step size~$\delta s = 0.1$.
We compute VFD with a batch size~$B=5$, perform~$R=15$ rounds of iterative active fine-tuning and query~$n_{\text{e}}=5$ expert demonstrations per round, using a replay ratio of~$\lambda = 0.5$. All experiments are repeated across three random seeds, and we report the mean and standard deviation across seeds.

\textbf{Baselines.} We compare VFD against six baselines for uncertainty estimation in modern generative models. \textbf{Action-L2} computes the pairwise~$L_2$ distance between independently generated action chunks from the ensemble members.
\textbf{ACE}~\cite{romer2025failure} computes the conditional entropy in the action chunk distribution. 
We adopt~\textbf{DECU}~\cite{berry2024shedding}, which performs pairwise distance estimation at a ``branching'' ODE timestep for denoising diffusion models~(DDIM), for flow matching.
Generative Uncertainty~(\textbf{GU})~\cite{jazbec2025generative} generates samples from all ensemble members starting from the same initial noise to approximate the posterior predictive as a Gaussian, and calculates its entropy.
\textbf{Entropy}~\cite{malinin2020uncertainty} and~\textbf{Perplexity}~\cite{fadeeva2024fact} operate on the VLM prefix, computing average per-token entropy and the exponentiated negative sum of the language tokens' log-probabilities, respectively.

\subsection{Calibration}

\begin{figure}[t]
    \begin{minipage}[t]{0.48\textwidth}
        \vspace{0pt}
        \input{figures/ensemble_size_ablation}
    \end{minipage}%
    \hfill
    \begin{minipage}[t]{0.48\textwidth}
        \vspace{0pt}
        \input{figures/language_variation}
    \end{minipage}
\end{figure}

\begin{table}[tb!]
    \setlength{\tabcolsep}{5pt}
    \centering
    \small
    \caption{\textbf{Calibration analysis.} Negative Spearman rank ($\uparrow$) and negative Pearson ($\uparrow$) correlation between uncertainty estimates and per-task success rates, averaged across iterative fine-tuning rounds.}
    \begin{tabular}{l c c c c c c c}
        \toprule
        Metric & Action-L2 & ACE & DECU & GU & Entropy & Perplexity & \textbf{VFD (ours)} \\
        \midrule
        $-$Spearman & \num{0.50}$^{\pm \num{0.13}}$ & \num{0.31}$^{\pm \num{0.12}}$ & \num{0.31}$^{\pm \num{0.13}}$ & \num{0.62}$^{\pm \num{0.00}}$ & \num{0.10}$^{\pm \num{0.12}}$ & \num{-0.04}$^{\pm \num{0.09}}$ & $\mathbf{0.71}$$^{\pm \num{0.03}}$ \\
        $-$Pearson & \num{0.48}$^{\pm \num{0.09}}$ & \num{0.36}$^{\pm \num{0.08}}$ & \num{0.23}$^{\pm \num{0.15}}$ & \num{0.65}$^{\pm \num{0.02}}$ & \num{0.23}$^{\pm \num{0.21}}$ & \num{0.02}$^{\pm \num{0.15}}$ & $\mathbf{0.71}^{\pm \num{0.02}}$ \\
        \bottomrule
    \end{tabular}
    \label{tab:calibration_summary}
\end{table}

Our uncertainty-guided data acquisition method relies on the epistemic uncertainty for an initial observation~$\bo_{kl}$ being strongly negatively correlated with the probability of success.
For this reason, we evaluate calibration using Spearman's rank correlation coefficient~$\rho$ between the mean uncertainty and the success rate as the primary calibration metric; a value of ~$- \rho=1$ indicates that the success rate decreases monotonically with increasing uncertainty.
In addition, we report the Pearson correlation coefficient, measuring the strength of the linear relationship between the two quantities.
We iteratively fine-tune a VLA ensemble to evaluate calibration over multiple rounds, collecting demonstrations at random each round to eliminate the effect of the data-acquisition strategy. 


\textbf{Small ensembles are sufficient.}
Since fine-tuning VLAs is expensive, it is highly desirable to have small ensembles.
\Cref{fig:ensemble_size_ablation} shows the impact of the ensemble size~$M$ on calibration for all fine-tuning rounds.
The VLA ensembles consistently remain well-calibrated even for~$M=2$, indicating that they 
do not collapse to a single mode of the posterior. 
Due to the competitive performance and computational advantage of two-member ensembles, we adopt this strategy in all subsequent experiments. \looseness -1

\textbf{VFD excels at estimating epistemic uncertainty.}
The overall calibration across all rounds is summarized in~\Cref{tab:calibration_summary}. Across all tasks and rounds, VFD is better calibrated than the baselines, achieving a~\num{0.09} higher negative Spearman correlation than the second-best method, GU. 
These results establish VFD as a powerful uncertainty estimator to guide data acquisition in active multitask fine-tuning. \looseness -1

\textbf{VFD attends to different input modalities.}
Since the initial observations within a task involve only variations in visual and proprioceptive input, we investigate whether VFD can capture epistemic uncertainty induced solely by varying the language command.
We generate five increasingly semantically perturbed versions of each language prompt ($\mathrm{P}1$ to $\mathrm{P}5$) and evaluate the success rate and uncertainty for each, using the final policy~$\pi_{15}$ after iterative random data selection.
The calibration curves in~\Cref{fig:language_variation}, normalized for comparability, show that VFD is also well-calibrated with respect to the language instruction.

\subsection{Active Fine-Tuning} \label{sec:exp_al}
Having shown superior calibration of VFD, we evaluate whether \method improves multitask performance more than diversity-based and uniform selection baselines, given the same expert demonstration budget. We compare our VFD-based acquisition rule against Action-L2 and GU, which performed best in the calibration experiments (cf.~\Cref{tab:calibration_summary}), as well as a diversity-based greedy baseline~\cite{sener2018active} that selects the most visually diverse initial observations. 
For the uncertainty-guided approaches, we sweep over temperature values~$\tau\in \{1, 1.5, 2, 2.5,3\}$. Further implementation details are provided in Appendix~\ref{sec:experimental_details}, and additional experimental results are reported in Appendix~\ref{app:additional_results}.

\begin{table}[tb!]
    \setlength{\tabcolsep}{5.5pt}
    \centering
    \caption{\textbf{Sample efficiency.} Number of active fine-tuning rounds ($\downarrow$) to reach certain success rates (SR) and final SR ($\uparrow$) for different demonstration selection strategies. Thresholds not reached within \num{15} rounds are marked as "—".}
    \begin{tabular}{l c c c c c}
        \toprule
        \small
        SR Threshold & Random & Diversity & \method w/ Action-L2 & \method w/ GU & \textbf{\method w/ VFD} \\
        \midrule
            $\geq \qty{40}{\percent}$ & $5.7^{\pm 1.2}$ & $\mathbf{4.7}^{\pm 0.5}$ & $5.0^{\pm 0.8}$ & $5.0^{\pm 1.4}$ & $5.0^{\pm 0.8}$ \\
            $\geq \qty{45}{\percent}$ & $6.7^{\pm 1.2}$ & $7.3^{\pm 0.9}$ & $6.0^{\pm 1.4}$ & $5.7^{\pm 2.4}$ & $\mathbf{5.3}^{\pm 0.9}$ \\
            $\geq \qty{50}{\percent}$ & $8.7^{\pm 2.1}$ & $9.7^{\pm 1.7}$ & $9.0^{\pm 2.4}$ & $7.3^{\pm 1.9}$ & $\mathbf{6.0}^{\pm 0.8}$ \\
            $\geq \qty{55}{\percent}$ & — & — & $9.0^{\pm 1.0}$ & $11.0^{\pm 1.6}$ & $\mathbf{8.0}^{\pm 1.6}$ \\
            $\geq \qty{60}{\percent}$ & — & — & — & $12.7^{\pm 1.7}$ & $\mathbf{10.0}^{\pm 0.8}$ \\
            $\geq \qty{65}{\percent}$ & — & — & — & — & $\mathbf{12.5}^{\pm 0.5}$ \\
        \midrule
        Final SR & $54.6^{\pm 0.9}$ & $54.9^{\pm 1.3}$ & $56.8^{\pm 7.6}$ & $64.0^{\pm 2.6}$ & $\mathbf{67.1}^{\pm 3.2}$ \\
        \bottomrule
    \end{tabular}
    \label{tab:active_learning_comparison}
\end{table}

\textbf{\method with VFD uncertainty guidance yields faster improvement.}
\Cref{tab:active_learning_comparison} reports the number of rounds it takes for different active fine-tuning approaches to achieve certain levels of success.
\method w/ VFD surpasses the performance of random and diversity-based selection in the last three rounds with~\qty{50}{\percent} and~\qty{45}{\percent} fewer costly expert demonstrations, respectively.
Compared to other approaches for uncertainty-guided data acquisition with \method, VFD requires at least \qty{22}{\percent} less data to achieve similar performance. 
After all rounds, \method with VFD-guided data acquisition achieves the highest success rate of \qty{67}{\percent} across tasks, outperforming the other methods by \qty{3}{} to \qty{12}{\percent} given the same budget of expert demonstrations.
Comparing the results to the calibration reported in~\Cref{tab:calibration_summary}, we find that the performance of uncertainty-guided active fine-tuning heavily depends on the quality of the uncertainty signals.

\textbf{Task and initial state selection benefit from uncertainty guidance.}
To isolate the effects of task-level selection from choosing the exact initial configuration for the expert demonstration, we 
compare both parts with and without uncertainty guidance. \Cref{fig:success_and_uncertainty_share} (left) shows that most of the gain over random acquisition stems from the prioritization at the task level, while uncertainty-based initial observation selection yields a minor additional improvement. 
A key reason for this performance gain is shown in the middle of~\Cref{fig:success_and_uncertainty_share} and in~\Cref{fig:selection_uncertainty}.
While pre-trained VLAs naturally exhibit different levels of uncertainty per task, our uncertainty-guided active fine-tuning strategy can reduce these differences and improve multitask performance more quickly by allocating the demonstration budget to tasks that require it most.
Further, \Cref{fig:success_and_uncertainty_share} (right) complements this observation by showing that over multiple rounds, \method tends towards high-entropy categorical distributions over tasks (i.e., more diversity in the selection). But for moderate $\tau \leq 2.5$, it can actually trade off this diversity for faster improvement in multitask success by focusing on high-uncertainty tasks (cf.~\Cref{fig:exploration-success-pareto}).

\begin{figure}
    \centering
    \includegraphics[width=\linewidth]{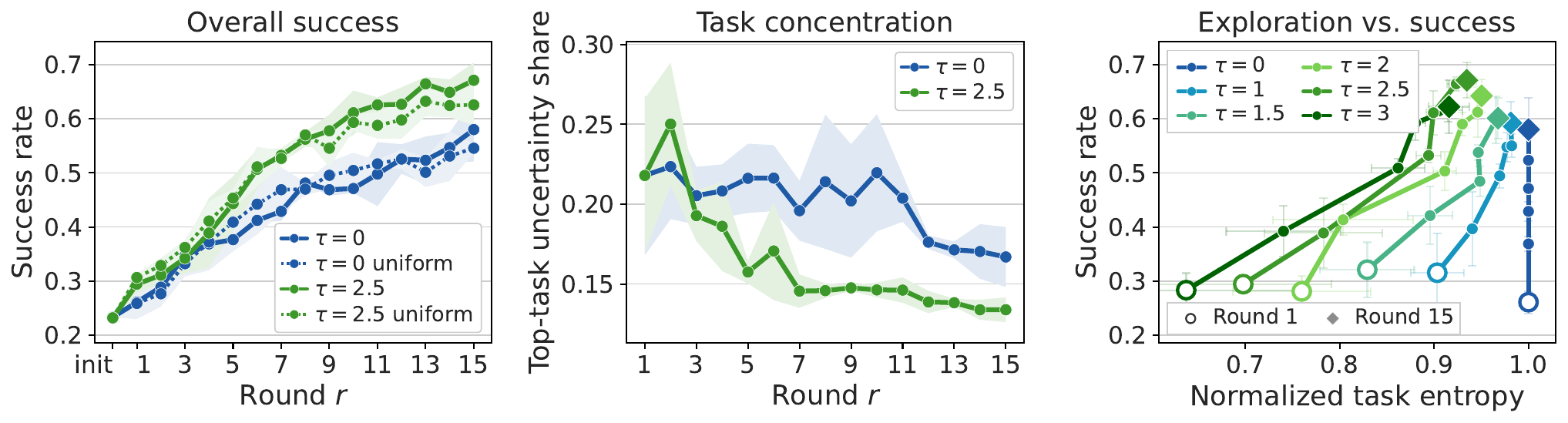}
    \caption{\textbf{Effect of the task-sampling temperature $\tau$ on \method}. Larger $\tau$ biases expert queries toward higher-uncertainty tasks, with uniform sampling for $\tau=0$. \textbf{Left:} 
    Uncertainty-guidance is beneficial both for sampling tasks and initial observations.
    Legend entries containing \textit{uniform} correspond to sampling initial observations within a sampled task uniformly instead of uncertainty-guided. \textbf{Middle:} Uncertainty-based sampling more rapidly reduces the fraction of uncertainty concentrated in the most uncertain task, indicating better allocation of demonstrations to underperforming tasks. \textbf{Right:} \method reduces the difference in prior knowledge about tasks; for $\tau \leq 2.5$, the temperature controls an exploration–exploitation trade-off between task coverage and final success rate.
    }
    \label{fig:success_and_uncertainty_share}
\end{figure}

\subsection{Failure Detection}

Lastly, we evaluate whether high epistemic uncertainty \textit{during deployment} indicates imminent task failure.
For this, we roll out the final VLA policy obtained after active fine-tuning \num{30} times per task and calculate VFD scores using the two-member ensemble at each action-generation timestep.
Following prior works~\cite{xu2025can, romer2025failure}, we calibrate task-specific thresholds from \num{10} successful rollouts using conformal prediction~\cite{angelopoulos2023conformal} and report accuracy and true-positive-rate~(TPR).
\begin{wrapfigure}[11]{r}{0.45\textwidth}    
    \centering
    \includegraphics[width=\linewidth]{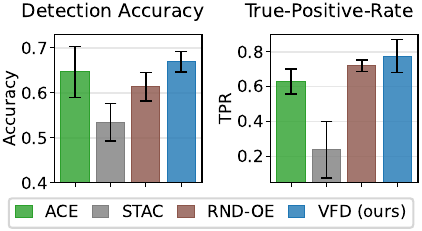}
    \vspace{-3mm}
    \caption{VFD epistemic uncertainty can detect failures during deployment.}
    \label{fig:failure_detection}
\end{wrapfigure}

\hspace{-3pt}We consider three recent baselines for detecting failures of generative policies: \textbf{ACE}~\cite{romer2025failure} computes the conditional entropy of the action distribution, \textbf{STAC}~\cite{agia2024unpacking} compares action distributions at consecutive timesteps, and \textbf{RND-OE}~\cite{romer2025failure} detects OOD observations.
As shown in~\Cref{fig:failure_detection}, high VFD is a strong signal for policy failures, achieving~\qty{67}{\percent} accuracy and correctly predicting~\qty{79}{\percent} of all failures (TPR).
These results demonstrate the potential of our method to improve the reliability of VLAs by enabling them to express their own confidence.

\section{Conclusion}
\label{sec:conclusion}
We present VFD, a mathematically grounded and computationally tractable method for estimating epistemic uncertainty in flow-based VLAs. On that basis, we propose \method, a framework for multitask active fine-tuning of VLAs that allocates the demonstration budget to tasks and initial observations that require it most.
Our experiments show that VFD is better-calibrated than baselines, and that using it for active fine-tuning with \method reduces data collection by at least~\qty{22}{\percent} compared to uncertainty estimation baselines.
These results, along with the superior failure-detection performance, demonstrate the capabilities of our methods to improve the reliability and adaptability of VLAs.


\textbf{Limitations.}
While we have shown that VFD requires only a two-member ensemble to be well-calibrated, the reliance on training and maintaining a separate set of model weights is a limitation of our method.
We use the uncertainty for the initial observation as a proxy for task difficulty. Our experiments empirically confirm this correlation, but there may be certain scenarios that ``look simple'' but are challenging for the policy (e.g., due to complex contact dynamics).
Finally, treating uncertainty as the primary signal for determining which data to collect does not account for informational dependencies across tasks. Developing tractable methods to quantify these dependencies and maximizing expected information gain are interesting avenues for future work.






\newpage
\section*{Acknowledgements}
This work was supported by the German Federal Ministry of Research, Technology and Space (BMFTR) under
the Robotics Institute Germany (RIG) funded by BMFTR
grant 16ME0997K, the German Research Foundation (DFG)
within the RTG project ConVeY funded by grant GRK 2428, the Humboldt Professorship for Robotics and Artificial Intelligence and the Swiss National Science Foundation under NCCR Automation, grant agreement 51NF40 180545.
Marco Bagatella is supported by the Max Planck ETH Center for Learning Systems.
\bibliographystyle{abbrvnat}
\bibliography{ref}

\fi

\ifappendix
\appendix
\ifpaper
\newpage
\addcontentsline{toc}{section}{Appendix} 
\part{Appendix} 
\parttoc 
\newpage
\else
\setcounter{figure}{5}
\setcounter{table}{2}
\setcounter{proposition}{1}
\addcontentsline{toc}{section}{Appendix} 
\part{} 
\parttoc 
\newpage
\fi
\onecolumn
\section{Theoretical Results}
\label{app:proofs}


\subsection{Flow Matching Fundamentals}
Flow matching~\cite{lipman2023flow} formulates generative modeling as learning a time-dependent vector field $\bu_s$ that transports samples from a simple base distribution $p_0$ to the data distribution $p_1 = q$.
We consider a noise distribution $p_0(\bx_0) = \mathcal{N}(\bx_0 \condon \bm{0},\bm{I})$ and a data distribution $\bx_1 \sim q$. The OT Gaussian conditional probability path is defined as
\begin{equation}
    \label{eq:app_fm_ot_path}
    p_s(\bx \condon \bx_1) = \mathcal{N}(\bx \condon s\bx_1, (1-s)^2 \bm{I}),
\end{equation}
and the corresponding flow that pushes the noise distribution $p_0$ to $p_s(\bx \condon \bx_1)$ is given by the affine map
\begin{equation}
    \bph_s(\bx_0) = (1-s)\bx_0 + s\bx_1.
\end{equation}
The conditional velocity field $\bu_s(\bx \condon \bx_1)$ defining this flow via the ODE $\frac{\der}{\der s}\bph_s(\bx_0) = \bu_s(\bph_s(\bx))$, and generating $p_s(\bx \condon \bx_1)$, can be derived as:
\begin{equation}
\label{eq:app_fm_conditional_velocity_field}
\bu_s(\bx \condon \bx_1) = \frac{\der}{\der s}\bph_s(\bx_0) = \bx_1 - \bx_0.
\end{equation}
Marginalizing over the data distribution $q(\bx_1)$ yields the marginal probability path
\begin{equation}
    \label{eq:app_fm_marginal_probabilities}
    p_s(\bx) = \int p_s(\bx \condon \bx_1)q(\bx_1) \der \bx_1,
\end{equation}
and the marginal velocity field is defined as
\begin{equation}
    \label{eq:app_fm_marginal_velocity_field}
    \bu_s(\bx) = \int \bu_s(\bx \condon \bx_1) \frac{p_s(\bx \condon \bx_1)q(\bx_1)}{p_s(\bx)} \der \bx_1.
\end{equation}

Finally, these components must satisfy the continuity equation
\begin{equation}
\label{eq:app_fm_continuity_equation}
    \frac{\partial}{\partial s}p_s(\bx) + \text{div}(p_s(\bx)\bu_s(\bx)) = 0,
\end{equation}
which ensures probability mass conservation.

\subsection{Proof of~\Cref{the:kl_velocity_difference}}
To simplify notation, we drop the fixed conditioning input~$\by$ and 
the parameterization of the velocity fields as neural networks in the following derivations, considering two terminal distributions~${p^1(\bx)=p_1^1(\bx)}$, ${p^2(\bx)=p_1^2(\bx)}$ induced by velocity fields~$\bu_s^1(\bx)$, $\bu_s^2(\bx)$ and similar base distributions~${p_0^1(\bx)=p_0^2(\bx)=\gaussian}$.
In the proof of~\Cref{the:kl_velocity_difference}, we will use the following lemma.
\begin{lemma} \label{lem:logp_u}
Let $\bu_s(\bx)$ be the marginal velocity field of a flow matching model trained with Gaussian OT conditional probability paths, and let~$p_s(\bx)$ be the marginal probability path induced by~$\bu_s(\bx)$. Then, for all $s\in [0,1)$,
\begin{align}
    \nabla \log p_s(\bx) = \frac{s \bu_s(\bx) - \bx}{1-s}.
\end{align}
\end{lemma}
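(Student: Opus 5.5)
The plan is to compute the score of the marginal path directly from its mixture representation and then express it through the marginal velocity field, using that the conditional paths are Gaussian with explicitly known scores. We work with the marginal quantities \eqref{eq:app_fm_marginal_probabilities} and \eqref{eq:app_fm_marginal_velocity_field}. Throughout we fix $s\in[0,1)$, so that $1-s>0$ and all Gaussian densities are nondegenerate.

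First, for the conditional path \eqref{eq:app_fm_ot_path} we have
\[
\nabla_{\bx} \log p_s(\bx \condon \bx_1) = -\frac{\bx - s\bx_1}{(1-s)^2}.
\]
Next, the conditional velocity \eqref{eq:app_fm_conditional_velocity_field} should be written as a function of $\bx$ rather than $\bx_0$. Inverting $\bx = (1-s)\bx_0 + s\bx_1$ gives $\bx_0 = (\bx - s\bx_1)/(1-s)$, hence
\[
\bu_s(\bx\condon\bx_1) = \frac{\bx_1 - \bx}{1-s}.
\]
This is the form actually used in \eqref{eq:app_fm_marginal_velocity_field}.

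Differentiating \eqref{eq:app_fm_marginal_probabilities} under the integral sign is justified because the Gaussian kernel and its derivative are dominated uniformly for $q$ integrable. This gives
\[
\nabla \log p_s(\bx) = \int \nabla_{\bx}\log p_s(\bx\condon\bx_1)\,\frac{p_s(\bx\condon\bx_1)q(\bx_1)}{p_s(\bx)}\,\der\bx_1 = \frac{s\,\E[\bx_1\condon \bx_s=\bx] - \bx}{(1-s)^2}.
\]
The posterior weights integrate to one. Likewise, \eqref{eq:app_fm_marginal_velocity_field} gives
\[
\bu_s(\bx) = \frac{\E[\bx_1\condon\bx_s=\bx] - \bx}{1-s},
\]
so $\E[\bx_1\condon\bx_s=\bx] = (1-s)\bu_s(\bx) + \bx$.

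Substituting this into the score yields
\[
\frac{s(1-s)\bu_s(\bx) + s\bx - \bx}{(1-s)^2} = \frac{s\bu_s(\bx) - \bx}{1-s},
\]
which is the claimed identity.

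The only delicate point is interpreting "the marginal velocity field of a flow matching model". The identity holds exactly for the true marginal pair $(p_s,\bu_s)$ built from some terminal distribution $q$, with $p_1=q$. For a learned model, we apply the lemma under the idealization that the model's $p_s$ is the Gaussian-convolved marginal path of its own terminal distribution $p_1$ and $\bu_s$ its corresponding marginal velocity. This is the sense in which the proof of \Cref{the:kl_velocity_difference} will use it, and I would state this assumption explicitly. Apart from that, the main technical obstacle is only the justification of differentiating under the integral, which is routine for $s<1$.
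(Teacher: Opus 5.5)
Your proof is correct and follows essentially the same route as the paper: where the paper cites Tweedie's formula to obtain $\nabla\log p_s(\bx) = \bigl(s\,\E[\bx_1\condon\bx_s=\bx]-\bx\bigr)/(1-s)^2$, you derive that identity directly by differentiating the Gaussian mixture under the integral. Both arguments then write $\bu_s(\bx) = \bigl(\E[\bx_1\condon\bx_s=\bx]-\bx\bigr)/(1-s)$ and substitute; your remark that the identity holds exactly only when $\bu_s$ is the true marginal velocity of its own terminal distribution is a fair point, which the paper leaves implicit when it later applies the lemma to learned velocity fields.
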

\begin{proof}
Applying Tweedie's formula~\cite{efron2011tweedie} to~\eqref{eq:app_fm_ot_path} and~\eqref{eq:app_fm_marginal_probabilities} yields
\begin{align}
    &\mathbb{E}[s\bx_1\condon\bx_s = \bx] = \bx + (1-s)^2 \frac{\nabla p_s(\bx)}{p_s(\bx)} \\ 
    \Leftrightarrow \quad
    & \nabla \log p_s(\bx) = \frac{s \mathbb{E}[\bx_1\condon\bx_s = \bx] - \bx}{(1-s)^2}.
    \label{eq:app_lem1_step1}
\end{align}
Plugging the conditional velocity~\eqref{eq:app_fm_conditional_velocity_field} into the marginal velocity~\eqref{eq:app_fm_marginal_velocity_field} gives
\begin{align}
    \bu_s(\bx) &= \int (\bx_1 - \bx_0) p_s(\bx_1 \condon \bx_s = \bx)\der \bx_1 \\
    &= \mathbb{E}[\bx_1 - \bx_0 \condon \bx_s = \bx] \\
    &= \frac{\mathbb{E}[\bx_1\condon \bx_s = \bx] - \bx}{1-s} \label{eq:app_lem1_step2} \\
    \Leftrightarrow \quad \mathbb{E}[\bx_1\condon \bx_s = \bx] &= \bx + (1-s)\bu_s(\bx), 
\end{align}
where~\eqref{eq:app_lem1_step2} follows from~$\bx=(1-s)\bx_0 + s\bx_1 \Leftrightarrow \bx_1 - \bx_0 = \frac{\bx_1 - \bx}{1-s}$ and taking the conditional expectation.
Finally, plugging~\eqref{eq:app_lem1_step2} into~\eqref{eq:app_lem1_step1} yields
\begin{align}
    \nabla \log p_s(\bx) &= \frac{s\bx + s(1-s)\bu_s(\bx) - \bx}{(1-s)^2} \\
    &= \frac{s \bu_s(\bx) - \bx}{1-s},
\end{align}
which concludes the proof.
\end{proof}

\begin{proof}[Proof of~\Cref{the:kl_velocity_difference}] \label{proof:velocity_difference}
Recall that we aim to express the KL divergence between two distributions~$p^1(\bx)$ and~$p^2(\bx)$ parameterized as flow matching models in terms of their time-dependent velocity fields~$\bu_s^1(\bx)$ and~$\bu_s^2(\bx)$. 
We write
\begin{align}
    \kl(p^1(\bx)\,||\,p^2(\bx)) &= \kl(p^1(\bx)\,||\,p^2(\bx)) - \kl(p_0^1(\bx)\,||\,p_0^2(\bx)) + \kl(p_0^1(\bx)\,||\,p_0^2(\bx)) \\
    &= \int_0^1 \frac{\partial}{\partial s}\kl(p_s^1(\bx)\,||\,p_s^2(\bx))\der s, 
    \label{eq:app_the1_step1}
\end{align}
where the last step follows from the fact that~$p_0^1 = p_0^2$ since both models use the same source distribution. 
Applying the product rule, we can reformulate the term in the integral as
\begin{align}
    \frac{\partial}{\partial s}\kl(p_s^1(\bx)\,||\,p_s^2(\bx)) 
    &= \frac{\partial}{\partial s} \int p_s^1(\bx)\log\frac{p_s^1(\bx)}{p_s^2(\bx)}\der \bx \\
    &= \int \frac{\partial}{\partial s}p_s^1(\bx)\log\frac{p_s^1(\bx)}{p_s^2(\bx)}\der \bx + \int p_s^1(\bx)\frac{\partial}{\partial s}\log\frac{p_s^1(\bx)}{p_s^2(\bx)}\der \bx \\
    &= \int \frac{\partial}{\partial s}p_s^1(\bx)\log\frac{p_s^1(\bx)}{p_s^2(\bx)}\der \bx + \int p_s^1(\bx)\left(\frac{\frac{\partial}{\partial s}p_s^1(\bx)}{p_s^1(\bx)} - \frac{\frac{\partial}{\partial s}p_s^2(\bx)}{p_s^2(\bx)}\right)\der \bx \\
    &= \int \frac{\partial}{\partial s}p_s^1(\bx)\log\frac{p_s^1(\bx)}{p_s^2(\bx)}\der \bx + \underbrace{\int \frac{\partial}{\partial s} p_s^1(\bx)\der \bx}_{=0} - \int \frac{p_s^1(\bx)}{p_s^2(\bx)} \frac{\partial}{\partial s}p_s^2(\bx)\der \bx. 
    \label{eq:app_the1_step2}
\end{align}
where the second term vanishes since $\int \frac{\partial}{\partial s} p_s^1(\bx)\der \bx = \frac{\partial}{\partial s}\int p_s^1(\bx)\der \bx = \frac{\partial}{\partial s} 1 = 0$.
To reformulate the other terms in~\eqref{eq:app_the1_step2}, we proceed as
\begin{align}
    \frac{\partial}{\partial s}\kl(p_s^1(\bx)\,||\,p_s^2(\bx)) 
    &= \int \frac{\partial}{\partial s}p_s^1(\bx)\log\frac{p_s^1(\bx)}{p_s^2(\bx)}\der \bx - \int \frac{p_s^1(\bx)}{p_s^2(\bx)} \frac{\partial}{\partial s}p_s^2(\bx)\der \bx \\
    \label{eq:app_the1_step3}
    &= -\int \mathrm{div}(p_s^1(\bx)\bu_s^1(\bx))\log\frac{p_s^1(\bx)}{p_s^2(\bx)}\der \bx + \int \frac{p_s^1(\bx)}{p_s^2(\bx)} \mathrm{div}(p_s^2(\bx)\bu_s^2(\bx))\der \bx \\
    \label{eq:app_the1_step4}
    &= \int p_s^1(\bx)\bu_s^{1,\top}(\bx) \nabla \log\frac{p_s^1(\bx)}{p_s^2(\bx)}\der \bx - \int p_s^1(\bx)\bu_s^{2,\top}(\bx) \nabla \log\frac{p_s^1(\bx)}{p_s^2(\bx)}\der \bx \\
    &= \int p_s^1(\bx)(\bu_s^1(\bx) - \bu_s^2(\bx))^\top \nabla \log\frac{p_s^1(\bx)}{p_s^2(\bx)}\der \bx \\
    \label{eq:app_the1_step5}
    &= \int p_s^1(\bx)(\bu_s^1(\bx) - \bu_s^2(\bx))^\top (\nabla \log p_s^1(\bx) - \nabla \log p_s^2(\bx)) \der \bx,
\end{align}
where~\eqref{eq:app_the1_step3} follows from the continuity equation~\eqref{eq:app_fm_continuity_equation}, and~\eqref{eq:app_the1_step4} follows from the divergence theorem and the assumption that~$p_s^m(\bx) \bu_s^m(\bx) \rightarrow \bm{0}$ as~$\|\bx\|\rightarrow \infty$.
Applying~Lemma~\ref{lem:logp_u} to the last term in the integral~\eqref{eq:app_the1_step5} yields
\begin{align}
    \label{eq:app_the1_step6}
    \nabla \log p_s^1(\bx) - \nabla \log p_s^2(\bx) = \frac{s}{1-s}(\bu_s^1(\bx) - \bu_s^2(\bx)).
\end{align}
Defining~$\kappa_s=\frac{s}{1-s}$ and plugging~\eqref{eq:app_the1_step6} into~\eqref{eq:app_the1_step5} results in
\begin{align}
    \frac{\partial}{\partial s}\kl(p_s^1(\bx)\,||\,p_s^2(\bx)) 
    &= \int p_s^1(\bx)(\bu_s^1(\bx) - \bu_s^2(\bx))^\top (\nabla \log p_s^1(\bx) - \nabla \log p_s^2(\bx)) \der \bx \\
    &= \int p_s^1(\bx) \kappa_s (\bu_s^1(\bx) - \bu_s^2(\bx))^\top (\bu_s^1(\bx) - \bu_s^2(\bx)) \der \bx \\
    \label{eq:app_the1_step7}
    &= \kappa_s \mathbb{E}_{\bx \sim p_s^1(\cdot)} \left[\|\bu_s^1(\bx) - \bu_s^2(\bx)\|_2^2 \right].
\end{align}
Finally, we can insert~\eqref{eq:app_the1_step7} into~\eqref{eq:app_the1_step1} and obtain
\begin{align}
    \label{eq:app_the1_step8}
    \kl(p^1(\bx)\,||\,p^2(\bx)) = \int_0^1 \kappa_s \mathbb{E}_{\bx \sim p_s^1(\cdot)} \left[\|\bu_s^1(\bx) - \bu_s^2(\bx)\|_2^2 \right] \der s,
\end{align}
concluding the proof.
\end{proof}

\begin{remark}
    Although the weighting~$\kappa_s$ in~\eqref{eq:app_the1_step8} diverges as~$s\rightarrow 1$, the VFD uncertainty estimate remains finite.
    In practice, our estimator~\eqref{eq:meth_vfd_score} evaluates the velocity difference on a grid~${s_\ell=\ell \delta s}$ with ${\ell \in \{0,\dots,N_s-1\}}$, so the largest weight is~${\kappa_{1-\delta s}=\frac{1-\delta s}{\delta s}}$, which is finite for~$\delta s > 0$. 
\end{remark}

\section{Experimental Details}
\label{sec:experimental_details}

\subsection{Computational Resources}
All experiments\footnote{
Our codebase builds on LeRobot~\cite{cadene2026lerobot}, released under an Apache License.} are run on a compute node with two NVIDIA RTX 4090 GPUs, \num{8} CPU cores, and \num{64} GB of RAM. With this hardware setup, one active learning experiment with \num{15} rounds of uncertainty quantification, episode selection, \num{4000} gradient steps, and \num{30} evaluation rollouts per task takes approximately \num{12.5} hours. We run \num{25} different configurations for active learning, ablating temperature and uncertainty estimates, over three seeds, resulting in \num{75} total active learning experiment runs. This results in approximately \num{940} compute hours, i.e., with two GPUs in parallel, approximately \num{1880} GPU hours. We estimate that about the same computing budget was used for developing the codebase and preliminary experiments.

\subsection{Environments}
We use the LIBERO simulation benchmark \citep{liu2023libero} (MIT License) in our experiments. 
The environments include a Franka robotic manipulator with a parallel-yaw gripper in a kitchen environment.
For each task, up to 50 human expert demonstrations are available.
The observations~$\bo_t$ consist of two RGB images with spatial resolution $256\times256$, one from a fixed workspace view and one from a camera mounted on the robot's end-effector. Additionally, observations include the 8-dimensional proprioceptive state of the arm, comprising the end-effector position (3D), the end-effector orientation parameterized as an axis-angle (3D), and the gripper joint positions (2D). Actions~$\ba_t$ are continuous and 7-dimensional: 3D delta end-effector translation, 3D delta end-effector orientation, and 1D gripper command.

\subsection{Pre-Training}
We pre-train a SmolVLA~\citep{shukor2025smolva} (Apache License) model for 30,000 steps with a batch size of 32 on LIBERO-Spatial, LIBERO-Object, and LIBERO-Goal (10 tasks each), as well as tasks $(0,1,2)$ from LIBERO-10 (also referred to as LIBERO-Long).

\subsection{Uncertainty Quantification}
The computation of VFD-based epistemic uncertainty estimation is summarized in~\Cref{alg:uncertainty estimation} and illustrated in~\Cref{fig:vfd_visualization}.

\begin{algorithm}[t!]
\caption{Efficient Epistemic Uncertainty Estimation for Flow-Matching Models using Velocity Field Disagreement (VFD).} 
\label{alg:uncertainty estimation}
\begin{algorithmic}[1]
    \State {\bfseries Input:} Flow-matching model ensemble~$p^{\bth_1}(\bx\condon \by)$, $p^{\bth_2}(\bx\condon \by)$ with vector fields~${\mathcal{V}=(\bm{v}_s^{\bth_1},\bm{v}_s^{\bth_2})}$ trained via OT paths, conditioning input~$\by$, ODE integration step size~$\delta s=1/N_s$.
    \State \textbf{Output:} Estimate of the epistemic uncertainty~$u_\mathrm{e}(\by; \mathcal{V})$ at the conditioning input~$\by$.
    \State {\bfseries Hyperparameters:} 
    Batch size~$B$.
    \State Sample~$\bx_0^{(1,1:B)} \sim \gaussian$, $\bx_0^{(2,1:B)} \sim \gaussian$. 
    \State Initialize~$u_\mathrm{e}\leftarrow 0$.
    \For{$\ell=0$ {\bfseries to} $N_s-1$} \Comment{ODE integration.}
        \State Set $s = \ell\delta s$.
        \For{{\bfseries each} $i \in \{1,2\}$}
            \State Compute the expected VFDs~${d_{\ell,i}^2 = \frac{1}{B}\sum_{b=1}^B\big\|\bm{v}_s^{\bth_1}\big(\bx_s^{(i,b)},\by\big)-\bm{v}_s^{\bth_2}\big(\bx_s^{(i,b)},\by\big)\big\|_2^2}$.
            \State Update uncertainty estimate $u_\mathrm{e}\leftarrow u_\mathrm{e} + \frac{1}{2 N_s}\frac{s}{1-s}d_{\ell,i}^2$.
            \State Forward integrate~$\bx_{s+\delta s}^{(i,1:B)} = \bx_s^{(i,1:B)} + \bm{v}_s^{\bth_i}\big(\bx_s^{(i,1:B)},\by\big)\delta s$.
        \EndFor
    \EndFor
\State \Return $u_\mathrm{e}(\by; \mathcal{V}) = u_\mathrm{e}$.
\end{algorithmic}
\end{algorithm}

\begin{figure}
    \centering
    \includegraphics[width=\linewidth]{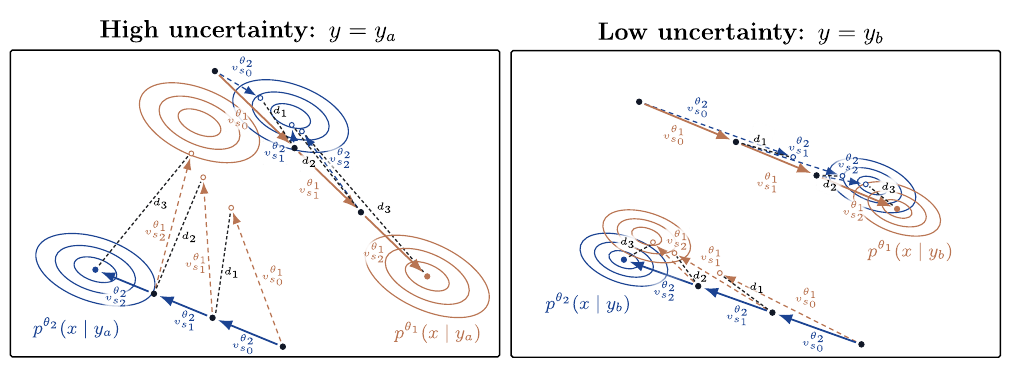}
    \caption{Visualization of the velocity-field disagreement~(VFD) computation for two conditioning inputs with different levels of epistemic uncertainty.}
    \label{fig:vfd_visualization}
\end{figure}

\textbf{Action-L2 Implementation.}
Action-L2 estimates epistemic uncertainty from the disagreement between the sampler policy and an ensemble of $M$ terminal action distributions. Given an observation $\bo$, we draw independent Gaussian noise samples from $p_0$ and generate $C$ terminal action chunks from the sampler policy and from each ensemble member. Let
\[
    \{\bA^{\mathrm{s}}_i(\bo)\}_{i=1}^C
\]
denote the action chunks generated by the sampler policy, and let
\[
    \{\bA^{(m)}_j(\bo)\}_{j=1}^C
\]
denote the action chunks generated by ensemble member $m \in \{1,\dots,M\}$.

The discrepancy between the sampler policy and ensemble member $m$ is computed as the mean pairwise Euclidean distance between their sampled terminal action chunks:
\[
    d_m(\bo)
    =
    \frac{1}{C^2}
    \sum_{i=1}^C
    \sum_{j=1}^C
    \left\|
        \bA^{\mathrm{s}}_i(\bo)
        -
        \bA^{(m)}_j(\bo)
    \right\|_2 .
\]
The Action-L2 epistemic uncertainty estimate is then the average discrepancy across ensemble members:
\[
    u_{\mathrm{Action\text{-}L2}}(\bo)
    =
    \frac{1}{M}
    \sum_{m=1}^M
    d_m(\bo) .
\]

\textbf{ACE Implementation.}
ACE~\cite{romer2025failure} estimates epistemic uncertainty from the dispersion of action samples generated by a single policy. Given an observation $\bo$, we draw $C$ independent Gaussian noise samples and generate $C$ action chunks via flow-matching ODE integration. For each sampled chunk $c \in \{1,\dots,C\}$, we extract the first three action dimensions at each timestep $k$, corresponding to end-effector position deltas
$\Delta \bm{p}^{(c)}_k \in \mathbb{R}^3$.

Starting from the current end-effector position $\bm{p}_{\mathrm{curr}}$, these deltas are integrated to obtain absolute position trajectories:
\[
    \bm{p}^{(c)}_t
    =
    \bm{p}_{\mathrm{curr}}
    +
    \sum_{k=1}^t
    \Delta \bm{p}^{(c)}_k,
    \qquad
    t \in \{1,\dots,H\}.
\]

At each timestep $t$ in the prediction horizon $H$, the sampled positions
$\{\bm{p}^{(c)}_t\}_{c=1}^C$ are discretized into a 3D grid. The grid cell size along each coordinate axis is set to $0.03$ times the empirical range of the samples along that axis. Let $\mathcal{B}_t$ denote a set of bins over the occupied grid at time $t$ and let $P_t(b)$ be the empirical probability of a sample falling into bin $b\in \mathcal{B}_t$ at time $t$.
The ACE uncertainty estimate is the Shannon entropy of these bin counts, averaged over the prediction horizon:
\[
    u_{\mathrm{ACE}}(\bo)
    =
    -
    \frac{1}{H}
    \sum_{t=1}^H
    \sum_{b \in \mathcal{B}_t}
    P_t(b)
    \log_2 P_t(b).
\]

\textbf{DECU Implementation.}
DECU~\cite{berry2024shedding} estimates epistemic uncertainty in diffusion models from the disagreement among ensemble members' denoising mean at an intermediate point of the generative process. 
Given the close relationship between diffusion and flow matching, we adapt DECU to flow matching by replacing the denoising mean with the learned velocity field.
Given an observation $\bo$, we sample $C$ independent Gaussian noise vectors $\bx_0^{(c)} \sim p_0$ for $c \in \{1, \dots, C\}$. Each ensemble member integrates the flow-matching ODE from $s=0$ to a branching time $s_b=0.995$ to obtain intermediate states $\bx_{s_b}^{(c)}$. At $s_b$, we evaluate the velocity fields of all $M$ ensemble members, $\bv_i^{(c)} = \bv_{s_b}^{\bth_i}(\bx_{s_b}^{(c)}, \bo)$ for $i \in \{1, \dots, M\}$.

Disagreement is measured using the Pairwise-Distance Estimator (PaiDE). For each sampled chunk $c$, define the squared pairwise velocity-field distances 

\[
    D^{(c)}_{ij}
    =
    \left\|
        \bv^{(c)}_i
        -
        \bv^{(c)}_j
    \right\|_2^2,
    \qquad
    i,j \in \{1,\dots,M\}.
\]

The score for sample $c$ is then

$$
    d^{(c)}_{\mathrm{DECU}}(\bo)
    =
    -
    \frac{1}{M}
    \sum_{i=1}^M
    \log
    \left(
        \frac{1}{M}
        \sum_{j=1}^M
        \exp\!\left(
            -D^{(c)}_{ij}
        \right)
    \right).
$$

The DECU epistemic uncertainty estimate for observation $\bo$ is the mean score over the $C$ sampled chunks:
\[
    u_{\mathrm{DECU}}(\bo)
    =
    \frac{1}{C}
    \sum_{c=1}^C
    d^{(c)}_{\mathrm{DECU}}(\bo).
\]

\textbf{Generative Uncertainty (GU) Implementation.}
As described in the main text, GU~\cite{jazbec2025generative} approximates the posterior predictive action distribution as a Gaussian. Using our deep ensemble, we compute a raw uncertainty score for each initial observation $\bo_{kl}$, where $k$ indexes the task and $l$ indexes the observation within that task. For a fixed initial noise sample, let 
$
    \bA^{(i)}(\bo_{kl}) \in \mathbb{R}^{H \times D_a}
$
denote the action trajectory generated by ensemble member $i \in \{1,\dots,M\}$, and let
$
    a^{(i)}_{kl,t,d}
$
denote its action value at timestep $t \in \{1,\dots,H\}$ and action dimension $d \in \{1,\dots,D_a\}$.

The raw GU uncertainty score is obtained by summing the log-variance of the ensemble predictions across all timesteps and action dimensions:
\[
    u^{\mathrm{raw}}_{kl}
    =
    \sum_{t=1}^{H}
    \sum_{d=1}^{D_a}
    \log
    \left(
        \operatorname{Var}_{i=1}^{M}
        \left[
            a^{(i)}_{kl,t,d}
        \right]
        +
        \epsilon_{\mathrm{var}}
    \right),
\]
where $\epsilon_{\mathrm{var}} = 10^{-8}$ is added for numerical stability.

Because the differential entropy of a continuous distribution can be negative, these raw scores may also be negative. To obtain non-negative uncertainty weights for use in~\eqref{eq:normalize_uncertainties}, we first aggregate the observation-level scores into a raw task-level uncertainty score $U^{\mathrm{raw}}_k$. We then shift the task-level scores by
\[
    U_k
    =
    U^{\mathrm{raw}}_k
    -
    \min_{k' \in \{1,\dots,K\}}
    U^{\mathrm{raw}}_{k'}
    +
    \epsilon_{\mathrm{off}}
    \left(
        \max_{k' \in \{1,\dots,K\}}
        U^{\mathrm{raw}}_{k'}
        -
        \min_{k' \in \{1,\dots,K\}}
        U^{\mathrm{raw}}_{k'}
    \right).
\]
Here, $\epsilon_{\mathrm{off}} > 0$ is a small offset coefficient. This shift removes negative values and when the task scores are not all identical, the range-dependent offset moves the minimum-uncertainty task away from zero so that it retains non-zero sampling probability after normalization.

\textbf{Entropy Implementation.}
Entropy~\cite{malinin2020uncertainty} utilizes the extent to which the VLM (SmolVLA backbone) successfully predicted the next token of prefix. Given an observation $\bo$, the policy constructs a prefix sequence of length $S_\text{prefix}$ containing images, robot states and language. The VLM predicts the next-token probability distribution $\psi_m$ at each valid prefix position $m \in \{1, \dots, S_\text{prefix}\}$, with a single forward pass. The uncertainty score is calculated by Shannon entropy of this distribution, averaged over all valid prefix tokens:

$$
u_{\mathrm{Entropy}} = - \frac{1}{S_\text{prefix}} \sum_{m=1}^{S_\text{prefix}} \sum_{z \in \mathcal{Z}} \psi_m(z) \log \psi_m(z),
$$

where $\mathcal{Z}$ is the VLM vocabulary and $\psi_m(z)$ is the predicted probability of token $z$ at position $m$.

\textbf{Perplexity Implementation.}
Perplexity~\cite{fadeeva2024fact} estimates uncertainty by measuring the likelihood of the language instruction under the VLM backbone. Similar to Entropy, it evaluates the prefix in a single forward pass without action generation. We extract the predicted next-token log-probabilities for the $S_\text{lang}$ language tokens within their respective prefix. Let $z_m$ be the true language token at sequence position $m$. The uncertainty score $u$ is the exponentiated negative mean log-probability of these tokens:

$$
u_{\mathrm{Perp}} = \exp \left( - \frac{1}{S_\text{lang}} \sum_{m=1}^{S_\text{lang}} \log \psi_m(z_m) \right) 
$$

\textbf{Diversity-Based Greedy.}
The diversity-based greedy baseline~\cite{sener2018active} selects initial observations to maximize visual diversity. Let $\mathcal{O}_k$ denote the pool of candidate initial observations, and $\hat{\mathcal{O}}$ denote the set of selected observations. We extract visual features for each $\bo_i \in \mathcal{O}_k$ using a SigLIP~\cite{zhai2023sigmoid} vision encoder\footnote{Weights downloaded from Hugging Face: \href{https://huggingface.co/google/siglip-so400m-patch14-384}{huggingface.co/google/siglip-so400m-patch14-384}.}. We rank all candidates using the $k$-Center Greedy algorithm. At each step, we select $\bo^*$ that maximizes the minimum distance $\Delta$ to $\hat{\mathcal{O}}$:

$$
\bo^* = \arg\max_{\bo_{i} \in \mathcal{O}_k \setminus \hat{\mathcal{O}}} \min_{\bo_{j} \in \hat{\mathcal{O}}} \Delta(\bo_{i}, \bo_{j}),
$$

where $\Delta(\bo_{i}, \bo_{j})$ is the Euclidean distance between visual features. We update $\hat{\mathcal{O}} \leftarrow \hat{\mathcal{O}} \cup \{\bo^*\}$ and repeat until all observations are ranked. In each active fine-tuning round, we select the top $n_\mathrm{e}$ observations from this ranking.

\subsection{Calibration Experiments}

We primarily measure calibration by comparing the uncertainty estimate for an initial observation with the success probability of rolling out the policy from there.
More specifically, we calculate the mean uncertainty~\eqref{eq:aggregate_uncertainties} per task and evaluate the per-task success rate by rolling out the policy multiple times for different initial states of the robot and the objects in the scene.
To assess whether higher uncertainty correlates with a lower success rate, we calculate the Spearman rank correlation coefficient~$\rho$ between the two quantities.
If and only if the success rate decreases monotonically with higher uncertainty, $\rho=-1$. 
As a secondary metric, we also report the Pearson correlation coefficient, which measures the strength of the linear relationship between the two quantities. 

In~\Cref{fig:language_variation}, we investigate whether the uncertainty scores are well-calibrated when a decrease in success rate stems solely from varying the language command. To this end, we use different language prompts for each task, listed in~\Cref{tab:language_prompts_list}.

\begin{table}[t!]
    \centering
    \small 
    \setlength{\tabcolsep}{3pt} 
    \caption{Prompts to evaluate calibration of the uncertainty estimators with respect to variations in the language input. Changes from the original prompt are marked in bold.}
    \begin{tabularx}{\textwidth}{l XXXXX} 
        \toprule
        Task & P1 (orig.) & P2 & P3 & P4 & P5 \\
        \midrule
        1 & put both the alphabet soup and the tomato sauce in the basket & put the alphabet soup and the tomato sauce \textbf{into} the basket & \textbf{place} both the alphabet soup and the tomato sauce in the basket & \textbf{move} both the alphabet soup and tomato sauce \textbf{to} the basket & \textbf{transfer the soup cans to} the basket \\ \addlinespace[0.5em]

        2 & put both the cream cheese box and the butter in the basket & \textbf{place} both the cream cheese box and the butter in the basket & put the cream cheese box and the butter \textbf{into} the basket & \textbf{move} the cream cheese and the butter \textbf{into} the basket & \textbf{place the dairy products} in the basket \\ \addlinespace[0.5em]

        3 & turn on the stove and put the moka pot on it & \textbf{switch} on the stove and \textbf{place} the moka pot on it & \textbf{start} the stove and \textbf{set} the moka pot on \textbf{the burner} & \textbf{activate} the stove and put the moka pot on \textbf{top of} it & \textbf{power} on the \textbf{cooktop} and \textbf{position the coffee maker} on it \\ \addlinespace[0.5em]

        4 & put the black bowl in the bottom drawer of the cabinet and close it & put the black bowl \textbf{into} the \textbf{lower} drawer of the cabinet and \textbf{shut} it & \textbf{place} the black bowl in the \textbf{bottom cabinet drawer} and close it & \textbf{store} the black bowl in the \textbf{bottom cabinet drawer} and close \textbf{the drawer} & \textbf{move the dark} bowl to the \textbf{lowest cabinet compartment} and \textbf{shut} it \\ \addlinespace[0.5em]

        5 & put the white mug on the left plate and put the yellow and white mug on the right plate & \textbf{set} the white mug on the left plate and the yellow and white mug on the right plate & \textbf{place} the white mug on the left plate and \textbf{place} the yellow and white mug on the right plate & put the white \textbf{cup} on the left \textbf{dish} and the \textbf{yellow-white cup} on the right \textbf{dish} & \textbf{arrange the mugs: white one} on the left plate, \textbf{yellow and white one} on the right plate \\ \addlinespace[0.5em]

        6 & pick up the book and place it in the back compartment of the caddy & \textbf{grab} the book and place it in the back compartment of the caddy & pick up the book and \textbf{put} it in the \textbf{rear} compartment of the caddy & \textbf{take} the book and place it in the back \textbf{section} of the caddy & \textbf{move} the book to the back compartment of the \textbf{organizer} \\ \addlinespace[0.5em]

        7 & put the white mug on the plate and put the chocolate pudding to the right of the plate & \textbf{place} the white mug on the plate and put the chocolate pudding to the right of the plate & put the white \textbf{cup} on the \textbf{dish} and \textbf{move} the chocolate pudding to the right \textbf{side of the plate} & \textbf{set} the white mug on the plate and \textbf{place} the chocolate pudding to the right of the plate & \textbf{position the} mug on the plate and \textbf{set the} pudding to the right of \textbf{it} \\ \addlinespace[0.5em]

        8 & place both the alphabet soup and the cream cheese box in the basket & \textbf{put} both the alphabet soup and the cream cheese box in the basket & \textbf{put} the alphabet soup and the cream cheese box \textbf{into} the basket & \textbf{move} both the alphabet soup and cream cheese box \textbf{to} the basket & \textbf{transfer the soup and cheese box to} the basket \\ \addlinespace[0.5em]

        9 & put both moka pots on the stove & \textbf{place} both moka pots on the stove & put the \textbf{two} moka pots on the stove & \textbf{set} both \textbf{coffee} pots on the stove & \textbf{move the two} moka pots \textbf{onto the cooktop} \\ \addlinespace[0.5em]

        10 & put the yellow and white mug in the microwave and close it & \textbf{place} the yellow and white mug in the microwave and close it & put the yellow and white mug \textbf{into} the microwave and \textbf{shut} it & \textbf{move} the yellow and white mug \textbf{into} the microwave and close \textbf{the door} & \textbf{place the yellow-white} mug in the microwave and \textbf{shut the door} \\
        \bottomrule
    \end{tabularx}
    \label{tab:language_prompts_list}
\end{table}

\subsection{Active Fine-Tuning}


The set of tasks~$\mathcal{T}$ over which we want to maximize multitask performance includes all $K=10$ tasks from LIBERO-10. We perform active fine-tuning for $R=15$ rounds, selecting $n_\mathrm{e}=5$ episodes per round, resulting in a total of~\num{75} selected demonstrations. In each round, we warm-start the VLA ensemble's model parameters with the final parameters from the previous fine-tuning round and perform \num{4000} gradient steps with a batch size of \num{32}. The learning rate is adapted using cosine decay, with \num{200} initial warmup steps followed by a decay from $5\times 1-^{-5}$ to $55\times 1-^{-5}$ over the remaining steps. We replay the ensemble training data~$\mathcal{D}_\mathrm{pre}$, using a replay ratio of~$\lambda = 0.5$. This means that \qty{50}{\percent} of the gradient steps are performed on data from episodes selected during active fine-tuning, and the remaining on~$\mathcal{D}_\mathrm{pre}$.
This strategy avoids catastrophic forgetting~\cite{dohare2024loss, romer2026clare} and maintains sufficient diversity within the VLA ensemble.

\subsubsection{Evaluation}

In each active fine-tuning round~$r$, we evaluate the current VLA ensemble by performing $30$ environment rollouts with the first member~$\pi_r^{(1)}$ for each LIBERO-10 task. We roll out the policy for up to~$520$ environment timesteps, terminating early upon task success.

\textbf{Sample Efficiency Computation.}
The goal of active fine-tuning is to improve performance with as little data as possible. To quantify this capability, we measure \textit{sample efficiency} as follows. 
Denote the success rate of method $\mathrm{A}$ after round~$r_1$ by ~$\mathrm{SR}_{A,r_1}$. Let~$r_2=\mathrm{argmin}_r \;\mathrm{s.t.}\; \mathrm{SR}_{B,r} \geq \mathrm{SR}_{A,r_1}$ be the number of rounds required for method $B$ to reach at least a similar success rate as~$\mathrm{SR}_{A,r_1}$. Then, if~$r_2 < r_1$, method~$B$ requires $(1-\frac{r_2}{r_1})\times 100\%$ fewer samples to achieve the same performance, i.e., method $B$ is~$\frac{r_1}{r_1-r_2}\times 100\%$ more sample-efficient than method $A$. To obtain robust results, we average across~$r_1 \in \{R-2,R-1,R\}$. 
With this evaluation protocol, we find that \method w/ VFD requires~\qty{50}{\percent} fewer samples than Random, \qty{45}{\percent} fewer samples than Diversity, ~\qty{40}{\percent} fewer samples than \method w/ Action-L2, and \qty{22}{\percent} fewer samples than \method w/ GU, corresponding to~\qty{101}{\percent}, \qty{83}{\percent}, \qty{68}{\percent}, and \qty{29}{\percent} higher sample efficiency, respectively.

\subsection{Failure Detection}
For our experiments, we utilize the failure prediction framework~\cite{romer2025failure}, released under the MIT license.
We use 10 successful rollouts per task for threshold calibration, i.e., to compute a one-sided conformal prediction band~\cite{diquigiovanni2024importance}.
At each policy inference timestep, we compute the uncertainty score~$u_\mathrm{e}$ and flag the rollout as \textit{Fail} if~$u_\mathrm{e}$ exceeds its corresponding threshold.
The true-positive-rate~(TPR) quantifies how many failures were correctly identified as such, and the TNR quantifies how many successful rollouts were correctly not flagged as~\text{Fail}.
We also report the detection time for rollouts correctly flagged as~\text{Fail}, which corresponds to the timestep when the uncertainty score first exceeds its threshold. The detection time is normalized by the maximum episode length.
Since failure detection is a trade-off between accuracy and early detection, we also report timestep-wise accuracy~(TWA), which rewards correctly detected failures more the earlier they are flagged.
Following~\cite{romer2025failure}, we do not cherry-pick a confidence value for conformal prediction and instead average results over quantiles $(0.9, 0.91, \dots,0.99)$.

\section{Additional Results} \label{app:additional_results}

\subsection{Toy Example}

We illustrate the capabilities of our VFD method for quantifying epistemic uncertainty in flow-matching models with a simple example.

\begin{figure}[tb!]
    \centering
    \includegraphics[width=\linewidth]{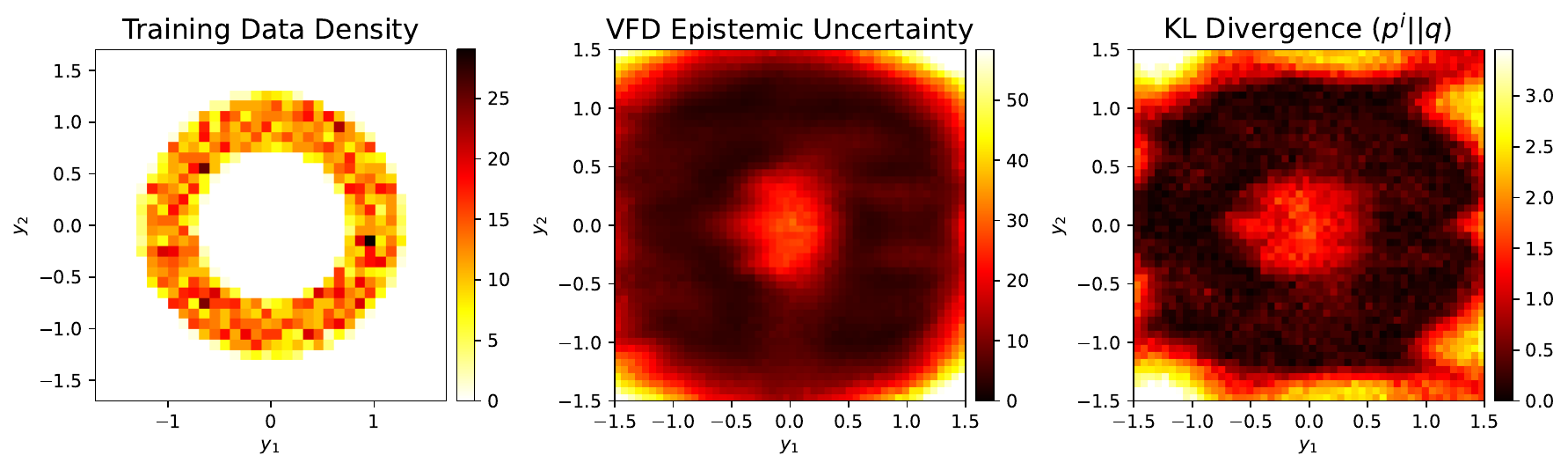}
    \caption{Epistemic uncertainty estimation for a 2D generative modeling problem. Our velocity field disagreement~(VFD) uncertainty score is high for inputs far from the training distribution, similar to the KL divergence between the learned models' conditional distributions and the ground truth.}
    \label{fig:toy_example}
\end{figure}

\begin{example}
Consider a conditional generative modeling problem with~${\bx=(x_1,x_2)}$, ${\by=(y_1,y_2) \in \R^2}$. 
The ground-truth conditional distribution is a bimodal Gaussian with input-dependent means; $q(\bx\condon\by) = 0.5\mathcal{N}(\bx\condon\bm{\mu}_1(\by), \bm{I}) + 0.5\mathcal{N}(\bx\condon\bm{\mu}_2(\by), \bm{I})$, where ${\bm{\mu}_1(\by) = (\sin{(\pi y_1)} + 0.5, \cos{(\pi y_2) - 0.5})}$ and ${\bm{\mu}_2(\by) = (\cos{(2\pi y_2)} - 0.5, \sin{(0.5\pi y_1) + 0.5})}$. 
We draw a set of conditioning inputs~$\{\by^{(n)}\}_{n=1}^{N}$ with~$N=4000$ from a uniform distribution over an annulus with inner radius \num{0.75} and outer radius \num{1.25} and sample~$\bx^{(n)}\sim q(\bx\condon\by^{(n)})$.
We parameterize the velocity field using an MLP and train an ensemble of~$M=5$ flow-matching models with a batch size of \num{256} for \num{300} epochs using the standard loss~\eqref{eq:pre_fm_loss_conditional}.
As shown in~\Cref{fig:toy_example}, our VFD score assigns regions far from the training data a high level of epistemic uncertainty, which closely resembles the average KL divergence~$\sum_{i=1}^M\kl(p^{\bth_i}\,||\,q)$ between the learned distributions and the ground truth~\eqref{eq:meth_mutual_information_approx}.

\end{example}

\subsection{Additional Results for \Cref{sec:exp_al}}

\begin{figure}[tb!]
    \centering
    \includegraphics[width=0.85\linewidth]{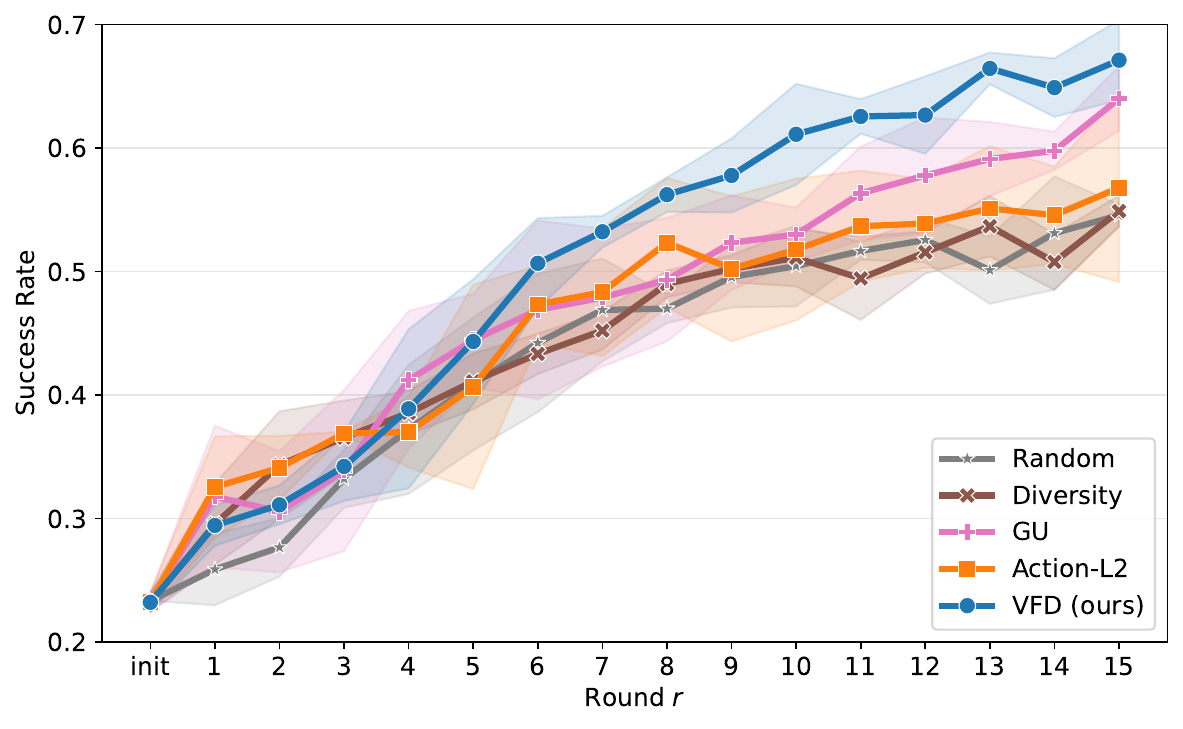}
    \caption{Learning curves corresponding to the active fine-tuning results reported in~\Cref{tab:active_learning_comparison}.}
    \label{fig:al_success_rate_comparison}
\end{figure}

We provide the full learning curves obtained in the active fine-tuning experiments in~\Cref{fig:al_success_rate_comparison}. These plots correspond to the results reported in Tables~\ref{tab:active_learning_comparison} and~\ref{tab:active_learning_comparison_full}.
\begin{table}[!tb]    
    \centering
    \small
    \caption{Success rates in percent ($\uparrow$) after different active fine-tuning rounds for different demonstration selection strategies. 
    The values reported in the summary~\Cref{tab:active_learning_comparison} are marked by a * symbol.} 
    \begin{tabular}{l c c c c}
        \toprule
         & Round 1 & Round 5 & Round 10 & Round 15 \\
        \midrule
        Diversity* & $29.6^{\pm 3.3}$ & $41.1^{\pm 2.3}$ & $51.1^{\pm 2.3}$ & $54.9^{\pm 1.3}$ \\
        Action-L2 ($\tau = 0$)  & $24.1^{\pm 2.7}$ & $40.6^{\pm 2.4}$ & $51.1^{\pm 3.4}$ & $54.0^{\pm 3.4}$ \\
        Action-L2 ($\tau = 1$)  & $27.1^{\pm 2.5}$ & $41.4^{\pm 4.2}$ & $46.9^{\pm 5.2}$ & $52.2^{\pm 1.8}$ \\
        Action-L2 ($\tau = 1.5$)* & $32.6^{\pm 4.1}$ & $40.7^{\pm 8.3}$ & $51.8^{\pm 5.7}$ & $56.8^{\pm 7.6}$ \\
        Action-L2 ($\tau = 2$)  & $31.8^{\pm 4.1}$ & $40.7^{\pm 9.0}$ & $53.0^{\pm 5.7}$ & $55.0^{\pm 2.6}$ \\
        Action-L2 ($\tau = 2.5$)  & $29.7^{\pm 3.6}$ & $41.8^{\pm 5.7}$ & $49.0^{\pm 7.4}$ & $53.4^{\pm 5.6}$ \\
        Action-L2 ($\tau = 3$)  & $27.4^{\pm 3.0}$ & $41.8^{\pm 6.8}$ & $46.4^{\pm 8.3}$ & $53.6^{\pm 5.1}$ \\
        GU ($\tau = 0$) & $24.4^{\pm 2.4}$ & $40.3^{\pm 4.5}$ & $49.6^{\pm 5.7}$ & $53.6^{\pm 2.4}$ \\
        GU ($\tau = 1$)* & $31.8^{\pm 5.7}$ & $44.4^{\pm 3.8}$ & $53.0^{\pm 2.2}$ & $64.0^{\pm 2.6}$ \\
        GU ($\tau = 1.5$)  & $31.4^{\pm 3.7}$ & $42.0^{\pm 1.2}$ & $57.4^{\pm 4.3}$ & $59.2^{\pm 5.5}$ \\
        GU ($\tau = 2$)  & $31.8^{\pm 3.4}$ & $45.6^{\pm 3.0}$ & $57.2^{\pm 3.6}$ & $60.6^{\pm 2.8}$ \\
        GU ($\tau = 2.5$)  & $31.4^{\pm 4.2}$ & $46.7^{\pm 3.8}$ & $59.3^{\pm 3.4}$ & $60.1^{\pm 2.2}$ \\
        GU ($\tau = 3$)  & $32.4^{\pm 3.6}$ & $49.3^{\pm 2.3}$ & $57.6^{\pm 1.7}$ & $59.9^{\pm 1.4}$ \\
        VFD ($\tau = 0$) & $26.1^{\pm 2.0}$ & $37.7^{\pm 0.9}$ & $47.1^{\pm 0.8}$ & $58.0^{\pm 5.9}$ \\
        VFD ($\tau = 1$)  & $31.6^{\pm 7.3}$ & $44.8^{\pm 5.3}$ & $54.8^{\pm 2.7}$ & $59.2^{\pm 3.9}$ \\
        VFD ($\tau = 1.5$)  & $32.1^{\pm 5.1}$ & $46.6^{\pm 1.6}$ & $53.8^{\pm 1.4}$ & $60.1^{\pm 3.8}$ \\
        VFD ($\tau = 2$)  & $28.1^{\pm 2.9}$ & $46.0^{\pm 5.5}$ & $59.0^{\pm 2.1}$ & $64.2^{\pm 3.0}$ \\
        VFD ($\tau = 2.5$)* & $29.4^{\pm 1.6}$ & $44.3^{\pm 5.0}$ & $61.1^{\pm 4.1}$ & $67.1^{\pm 3.2}$ \\
        VFD ($\tau = 3$)  & $28.3^{\pm 3.1}$ & $45.8^{\pm 2.8}$ & $59.2^{\pm 3.2}$ & $62.2^{\pm 4.9}$ \\
        VFD ($\tau = 0$, uniform)*\footnotemark & $25.9^{\pm 2.9}$ & $40.9^{\pm 5.4}$ & $50.4^{\pm 3.3}$ & $54.6^{\pm 0.9}$ \\
        VFD ($\tau = 1$, uniform) & $30.7^{\pm 5.5}$ & $44.0^{\pm 5.2}$ & $52.6^{\pm 0.9}$ & $58.1^{\pm 4.5}$ \\
        VFD ($\tau = 1.5$, uniform) & $31.0^{\pm 5.1}$ & $43.4^{\pm 5.1}$ & $56.2^{\pm 1.1}$ & $59.9^{\pm 4.7}$ \\
        VFD ($\tau = 2$, uniform) & $29.4^{\pm 2.4}$ & $43.2^{\pm 5.8}$ & $58.2^{\pm 2.1}$ & $62.0^{\pm 0.7}$ \\
        VFD ($\tau = 2.5$, uniform) & $30.7^{\pm 0.7}$ & $45.3^{\pm 1.7}$ & $59.3^{\pm 1.5}$ & $62.6^{\pm 3.7}$ \\
        VFD ($\tau = 3$, uniform) & $29.7^{\pm 2.1}$ & $40.8^{\pm 3.6}$ & $55.4^{\pm 2.9}$ & $61.9^{\pm 6.3}$ \\
        \bottomrule
    \end{tabular}
    \label{tab:active_learning_comparison_full}
\end{table}
\footnotetext{This corresponds to sampling tasks and initial observations uniformly at random.}

\subsection{Ensembling vs. Laplace Approximation}
\begin{figure}[tb!]
    \centering
    \hfill
    \includegraphics[width=0.49\linewidth]{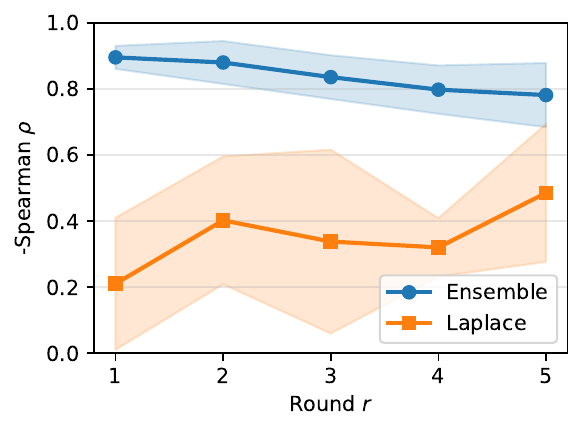}
    \hfill
    \includegraphics[width=0.49\linewidth]{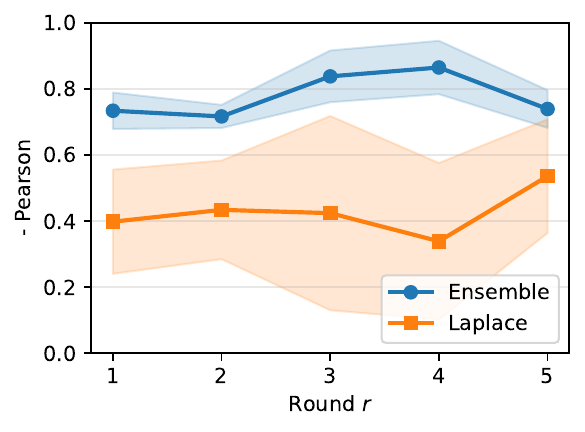}
    \hfill
    \caption{Ensembling vs. Laplace Approximation. Calibration is measured by the negative Spearman rank correlation (left) and the negative Pearson correlation (right) between the average task success rate and VFD uncertainty. Two separate ensemble members achieve much better calibration than sampling a second model from the last-layer Laplace approximation.}
    \label{fig:ensemble_vs_laplace}
\end{figure}

We compare training a VLA ensemble to computing the Laplace approximation~\cite{daxberger2021laplace} for sampling from the model posterior.
Since computing the full Laplace approximation for the entire SmolVLA model is computationally intractable, we resort to the last-layer Laplace approximation, which the authors also use to compute GU in the original work~\cite {jazbec2025generative}.
We fit a diagonal last-layer Laplace approximation with \texttt{laplace-torch}~\cite{daxberger2021laplace} around each trained SmolVLA checkpoint. Concretely, we freeze the full VLA and place the posterior only on the final action projection layer, which maps the action expert hidden states to the predicted flow-matching action velocity.
The Laplace posterior is fit as a regression model on the same training episodes used for that active-learning round: for each calibration frame, we sample a flow-matching time~$s$, a noise~$\bx_0\sim\gaussian$, and a target action~$\bx_1$, and use the corresponding velocity~$\bx_1 - \bx_0$ as the regression target. We use all selected training episodes for the round, a calibration fraction of 1.0, and a diagonal Hessian approximation.

Even when limited to the last layer, the computational cost of the last-layer Laplace approximation remains considerable, requiring about \num{5}~hours per checkpoint/round on an NVIDIA RTX 4090 GPU. This is much higher than the \num{30}~minutes it takes to fine-tune a second ensemble member for \num{4000}~steps.
Due to the high cost, we only fit the Laplace approximation for the first \num{5}~rounds. As shown in~\Cref{fig:ensemble_vs_laplace}, the two-member ensemble provides much better VFD score calibration than the Laplace approximation. 
Given also the lower computational cost of ensembling, we consider it the preferred option for posterior sampling in VLAs for uncertainty quantification.

\begin{figure}[t!]
    \centering
    \includegraphics[width=0.95\linewidth]{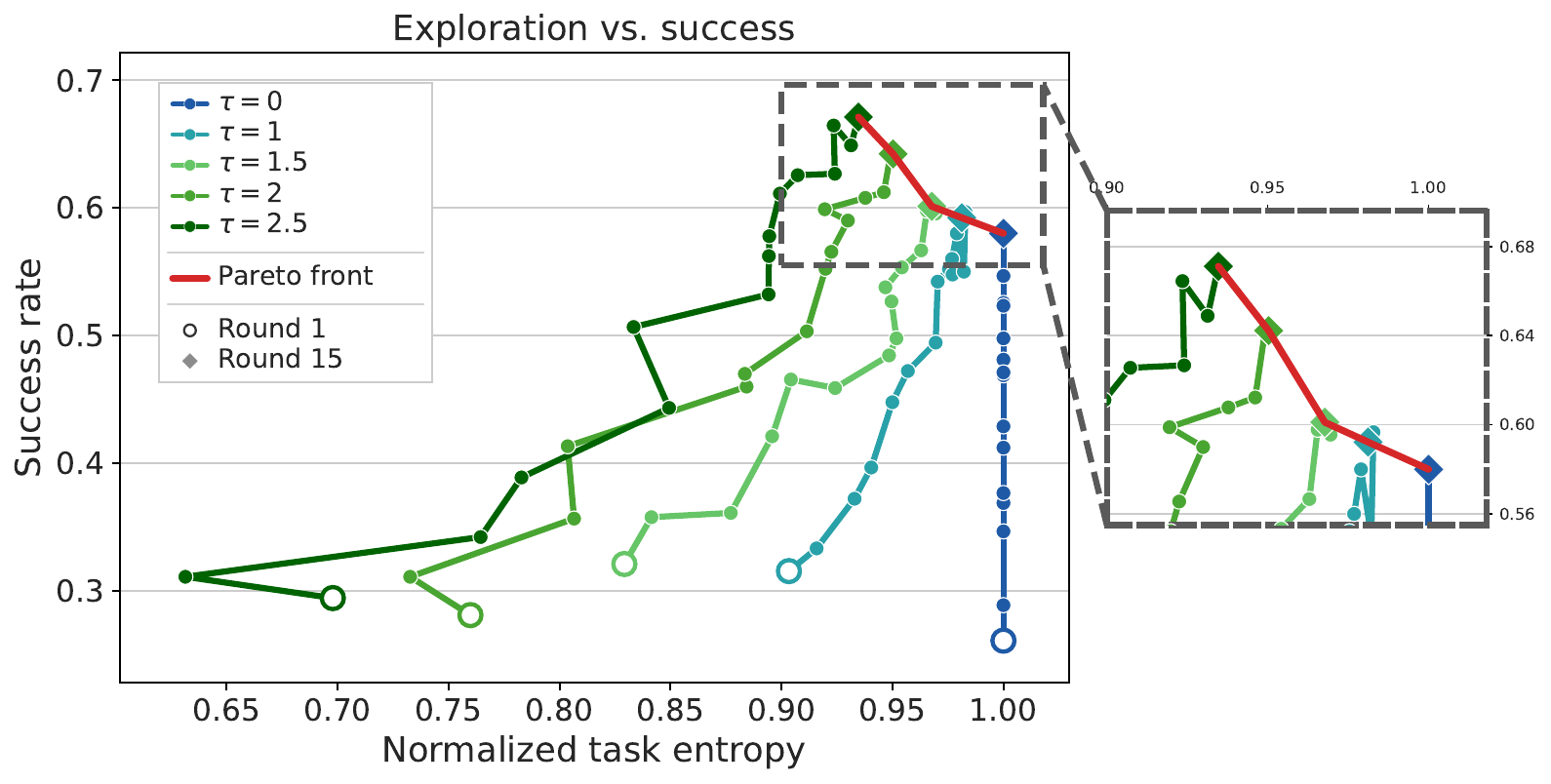}
    \caption{Pareto front of the exploration-exploitation trade-off.}
    \label{fig:exploration-success-pareto}
\end{figure}

\subsection{Iterative Fine-Tuning vs. Retraining}
We also test whether our iterative active-learning loop, designed to minimize data-collection effort, requires an additional separate full fine-tuning stage to yield optimal policy performance.
To this end, we compare the final-round policy~$\pi_{15}$ to a policy retrained from the base VLA using a 50/50 split between the ensemble training data~$\mathcal{D}_{\text{pre}}$ and the actively selected data~$\mathcal{D}_{\text{new}}^{(\leq15)}$.
For the selection of our best performing temperature $\tau=2.5$, the retrained policy achieves an overall success rate of~$70.6\%^{\pm 0.9}$, which is only~3.5\% higher than the iteratively fine-tuned policy.
This demonstrates that uncertainty-guided acquisition produces a strong policy already during the iterative active learning process.

\subsection{Effect of $\tau$ on \method}
\Cref{fig:exploration-success-pareto} shows a detailed view of the trade-off between exploiting the uncertainty estimates and exploring the task space evenly. The exploitation is measured via the success rate of the rolled-out policy, and the diversity of the categorical task distribution is measured by its entropy. We can see that they form a Pareto front: higher temperatures increase the success rate at the cost of diversity. Naturally, this does not hold ad infinitum. In our experiments, we observe that for temperatures $\tau > 2.5$, the success rate starts declining, as too few tasks from the pool are explored (cf. \Cref{fig:selection_uncertainty}), making it harder to maximize the success rate across all tasks. 

\begin{figure}
    \centering
    \includegraphics[width=\linewidth]{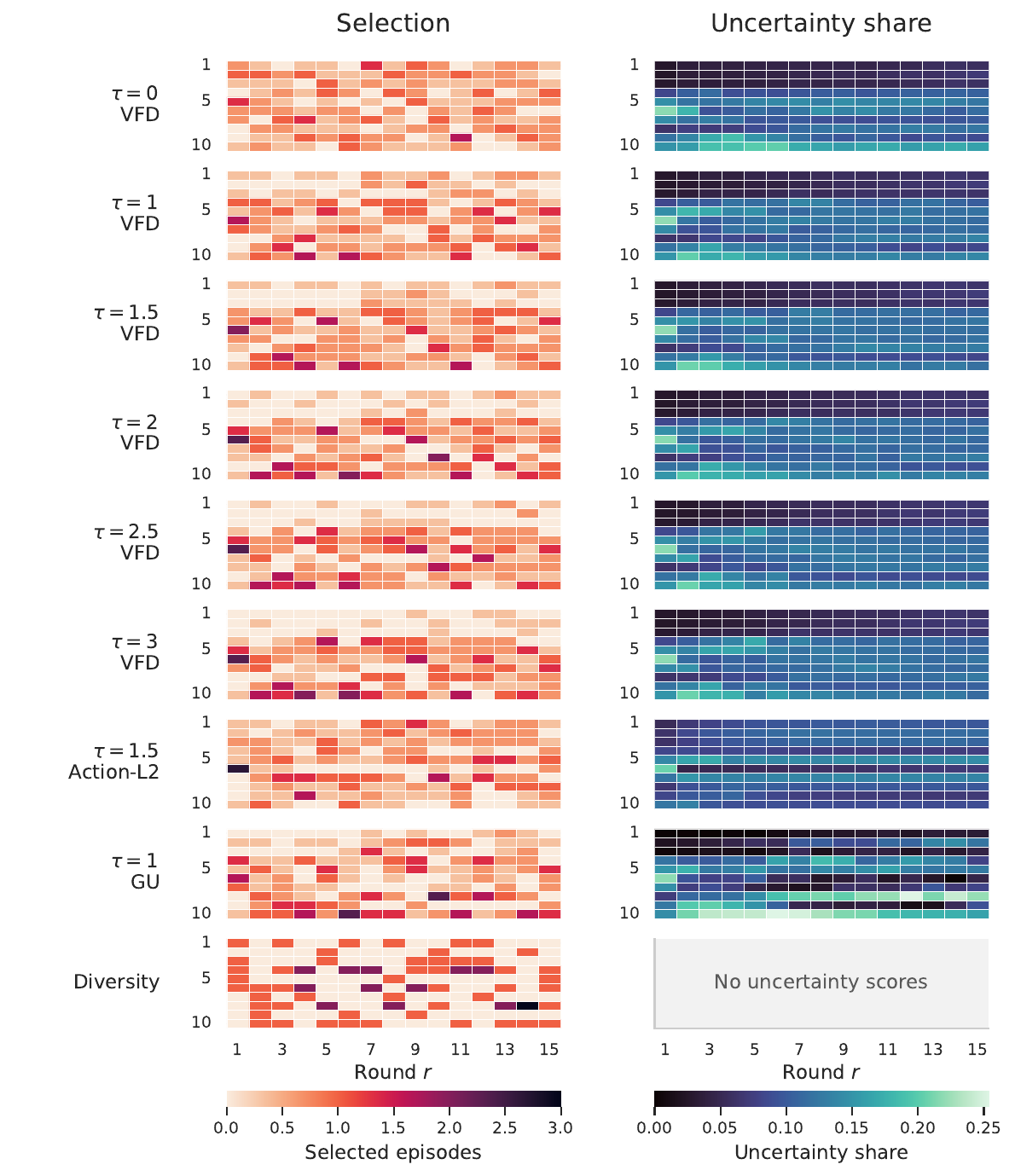}
    \caption{Behavior of our VFD-acquisition rule for different temperature values~$\tau$ over all active fine-tuning rounds. Left: Number of selected episodes per task. Right: Relative share of overall uncertainty per task.}
    \label{fig:selection_uncertainty}
\end{figure}

\subsection{Detailed Results on Failure Detection} 
\begin{figure}[tb!]
    \centering
    \includegraphics[width=\linewidth]{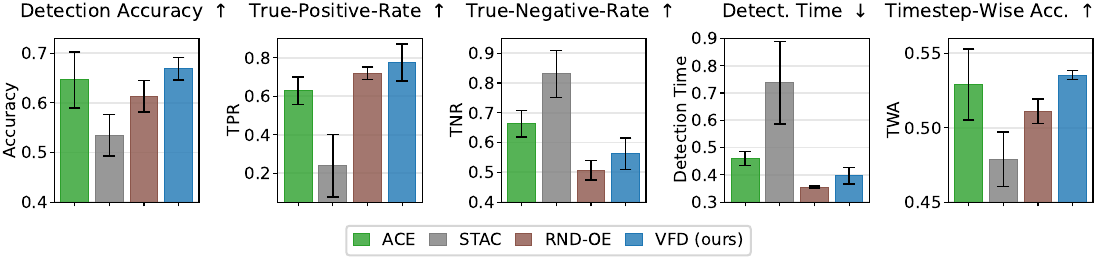}
    \caption{Full results of our failure detection experiments. VFD achieves the highest timestep-wise accuracy~(TWA), which measures the capability to detect failures accurately \textit{and} early.}
    \label{fig:failure_detection_wide}
\end{figure}

We provide more detailed results that include all metrics in~\Cref{fig:failure_detection_wide}. 
Compared with baselines specifically developed for failure detection, our general VFD uncertainty estimation method demonstrates strong performance, achieving the highest accuracy~(\num{0.67}), TPR (\num{0.79}), and TWA (\num{0.54}), as well as the second lowest detection time of~\num{0.4}.
These results highlight the potential of VFD-based epistemic uncertainty estimation for online monitoring of VLAs, enabling targeted human intervention or activation of safety fallbacks.

\section{Broader Impact} \label{sec:impact}
This work may have positive societal impact by improving the reliability of VLAs in robotic systems. In particular, calibrated epistemic uncertainty can help identify unfamiliar situations, support more data-efficient adaptation, and enable detection of runtime failures before unsafe actions are executed. 
At the same time, uncertainty estimates are naturally imperfect and may pose risks if deployed as the sole safety mechanism; false confidence could lead to harmful robot behavior, while overly conservative estimates could reduce usability. These risks should be mitigated through conservative threshold design, careful human oversight, safe fallback policies, and rigorous evaluation under realistic deployment conditions before real-world use.
\fi

\ifpaper
\else
\newpage

\bibliographystyle{abbrvnat}
\bibliography{ref}

\fi


\end{document}